\documentclass{article}

\usepackage{arxiv}          

\usepackage[utf8]{inputenc} 
\usepackage[T1]{fontenc}    
\usepackage{booktabs}       
\usepackage{graphicx}      
\usepackage{amsfonts}       
\usepackage{amsmath}        
\usepackage{enumitem}
\usepackage{amssymb}        
\usepackage{amsthm}         
\usepackage{bm}             
\usepackage{nicefrac}       
\usepackage{microtype}      
\usepackage{xcolor}         
\usepackage{float}          
\usepackage{subcaption}     
\usepackage{algorithm}
\usepackage{algpseudocode}
\usepackage{url}            
\usepackage[numbers,sort&compress]{natbib}
\usepackage[colorlinks=true,citecolor=blue,linkcolor=blue,urlcolor=blue]{hyperref}
\newtheorem{theorem}{Theorem}[section]
\newtheorem{lemma}[theorem]{Lemma}

\newcommand{\TODO}[1]{\textcolor{red}{[TODO: #1]}}

\DeclareMathOperator{\sem}{{\mathrm{sem}}}
\DeclareMathOperator{\tex}{{\mathrm{tex}}}

\title{Learning When to Denoise: Optimizing Asynchronous Schedules for Latent Diffusion
}
\author{%
  Bingshuo Qian \qquad Xiang Cheng \\
  Department of Electrical and Computer Engineering \\
  Duke University \\
  \texttt{\{bingshuo.qian, xiang.cheng\}@duke.edu}
}
\renewcommand{\shorttitle}{Learning When to Denoise}
\renewcommand{\undertitle}{}
\date{}

\begin{document}

\maketitle

\begin{abstract}
Multi-representation diffusion models can improve visual synthesis by denoising
complementary views of an image, but their performance depends critically on
the asynchronous schedule that determines when each representation is denoised.
We propose to learn this schedule. Our method formulates asynchronous flow
matching over multiple representation spaces and uses a schedule-corrected
objective that keeps each representation's local noising-time weights fixed as
the schedule changes. We instantiate the schedule with a flexible parametric
class that is convex and monotone by construction, and learn it using a fast
joint probe with less than \(1\%\) additional training compute. On ImageNet
\(256\times256\), the learned schedule substantially improves both convergence
speed and final quality under a matched \(675\)M-parameter XL backbone. With
AutoGuidance, our \(200\)-epoch model reaches FID \(1.05\), matching the
\(800\)-epoch SFD-XL baseline with \(4\times\) less training. Training to
\(600\) epochs achieves \textbf{FID 1.02, outperforming the SOTA
1B-parameter SFD-XXL FID of 1.04 while using fewer parameters.}
In the unguided setting, our \(200\)-epoch model reaches FID \(2.37\), already
below the best \(800\)-epoch SFD-XL result (\(2.54\)) at \(4\times\) less
training, and improves to FID \(2.14\) at \(600\) epochs.
Code is available at \url{https://github.com/bsq532087/LWD}.
\end{abstract}

\begin{figure}[H]
  \centering
  \IfFileExists{figures/trajectory_eagle18_sober_v10.png}{%
    \includegraphics[width=\linewidth]{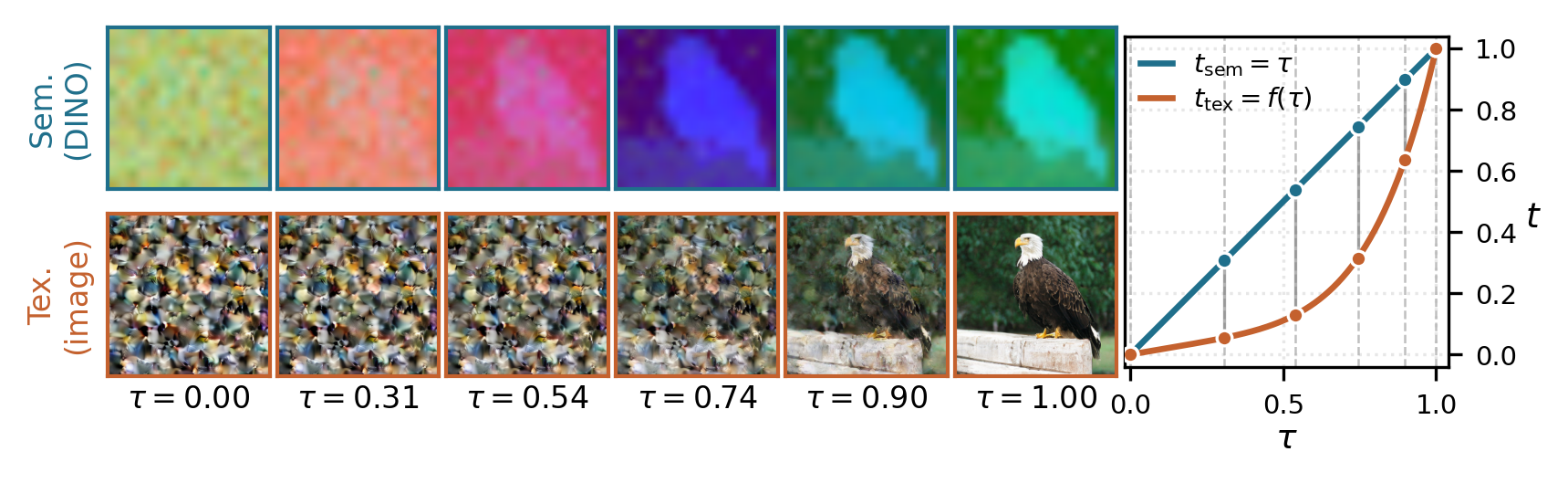}%
  }{%
    \fbox{\begin{minipage}[c][0.20\textheight][c]{0.95\linewidth}
    \centering \TODO{Add intro trajectory figure: figures/trajectory\_eagle18\_sober\_v10.png}
    \end{minipage}}%
  }
  \caption{
    \textbf{Learning the relative denoising schedule between semantic and
    texture latents.}
    Along a shared global time \(\tau\), the semantic branch follows
    \(t_{\sem}=\tau\), while the texture branch follows a learned
    schedule \(t_{\tex}=f^\star(\tau)\le\tau\). Since larger \(t\)
    denotes a cleaner latent, semantic structure is formed earlier and guides
    subsequent texture refinement. The dashed lines mark the trajectory states
    visualized on the left.
  }
  \label{fig:intro-trajectory}
\end{figure}

\section{Introduction}
\label{sec:intro}
Diffusion and flow-based generative models have become a standard recipe for
high-fidelity visual synthesis. Much of their practical success comes from
choosing a representation in which denoising is both expressive and efficient:
latent diffusion reduces pixel-space cost by denoising compressed VAE latents
\citep{ldm}, while diffusion transformers and flow-matching variants scale this
recipe with large Transformer backbones \citep{dit,sit,flowmatching}.

A recent and complementary trend is to inject additional representational information into the generative process. For example, REPA aligns diffusion hidden states with clean features from pretrained visual encoders \citep{repa}, VA-VAE aligns the autoencoder latent space with vision-foundation-model features \citep{lightningdit}, and FAE compresses pretrained visual features into compact latents for generation \citep{fae}. SFD \citep{sfd} and Latent Forcing \citep{latentforcing} improved generation quality by directly denoising over multiple different representations.

The success of multi-representation denoising motivates a broader question: at
what rate should each representation be denoised? A synchronous schedule ties
all representations to the same noise level at each sampling step. An
asynchronous schedule instead allows different representations to occupy
different noise levels at the same step. This flexibility is useful because
different representations need not carry the same kind of information. A
higher-level representation may encode global structure useful for conditioning,
while an image-decoding representation may be responsible for recovering fine
visual detail.

Choosing the asynchronous schedule is therefore a central design problem. A good
schedule must coordinate how information is revealed across representation
spaces, while balancing the quality of flow-matching with the ease of integration. These requirements depend on
the representation pair, model architecture, training objective, and sampler. As
the schedule class becomes more flexible, or as the number of representations
grows, selecting a good schedule becomes a difficult modeling problem
rather than a minor hyperparameter choice.

In this paper, we propose a framework for jointly learning the asynchronous schedule alongside the flow network. We first formulate a unified loss for both the flow network and the asynchronous schedule, and then propose an efficient two-stage algorithm that optimizes the schedule while learning to match the flow. Our key contributions are summarized below:

\paragraph{Contributions.}
\begin{enumerate}[leftmargin=0.4cm]
    \item \textbf{Asynchronous flow matching with learnable semantic--texture schedules.}
We formulate flow matching for composite latents with separate semantic and
texture representation groups, where each space follows its own local noising
time while a single global time indexes the sampling trajectory. This allows
\textbf{general differentiable semantic--texture schedules}. We theoretically characterize the ideal asynchronous flow in two equivalent ways: as the vector field satisfying the continuity equation of the asynchronous interpolation, and as a
group-wise transformation of the score of that interpolation
(Theorem~\ref{thm:ideal-async-flow}). This formulation makes the schedule itself
an optimizable object, enabling flexible parameterizations beyond settings that can be selected by grid search.
\item \textbf{A schedule-corrected objective for flow and schedule learning.}
We define a component-wise flow objective
\(\mathcal{L}_{\mathrm{flow}}(\theta,\rho;\omega)\) that plays two roles. With
\(\rho\) fixed, it learns the local asynchronous flow; with the denoiser
adapting, it provides the signal for learning \(\rho\). We show that its
population minimizer recovers the ideal local flows
(Theorem~\ref{thm:async-fm-population-optimum}). For schedule learning, we
identify a confound: changing the schedule also changes the effective
distribution over local noising times at which the flow-matching losses are
evaluated. We prove a necessary and sufficient change-of-variables correction
that exactly counteracts this effect, keeping the marginal weighting over each
group's local noising time fixed as the schedule changes
(Lemma~\ref{lem:schedule-time-reweighting}). Finally, we add a kinetic-energy
regularizer that favors discretization-friendly trajectories, giving the schedule
objective \(\mathcal{L}_{\rho}\) and final denoiser objective
\(\mathcal{L}_{\theta}\) in
\eqref{eq:rho-objective} and \eqref{eq:theta-objective}.

\item \textbf{Constrained schedule class and efficient joint optimization.}
We propose a parametric schedule class that enforces monotonicity, convexity,
and semantic-leading behavior by construction, while allowing efficient
evaluation of both the schedule and its derivatives. We also identify the need
for joint schedule--denoiser optimization (with a frozen denoiser, schedule
updates are dominated by model--schedule mismatch). We therefore introduce an
efficient two-stage procedure (Algorithm \ref{alg:probe}) that jointly learns the schedule and the denoiser with less than \(1\%\) overhead in compute budget.

\item \textbf{Significant empirical gains.} On class-conditional ImageNet
  \(256\times256\), the learned schedule improves over the hand-tuned
  semantic-first schedule under a matched architecture, latent representation,
  auxiliary losses, sampler, and weak-model architecture. In the unguided setting it improves
  FID at every matched training budget, reaching FID \(2.37\) at \(1\)M iterations
  and \(2.14\) at \(3\)M, and it matches the \(4\)M-iteration SFD-XL checkpoint
  with roughly \(5\times\) fewer diffusion-model updates. With AutoGuidance, our
  \(675\)M-parameter XL model reaches FID \(1.05\) at \(200\) epochs and
  \(\mathbf{1.02}\) at \(600\) epochs, the lowest among all \(675\)M-parameter
  models we compare against and below the \(1.0\)B-parameter SFD-XXL result
  (\(1.04\)). The same schedule-learning procedure also improves over SFD when
  the SemVAE semantic latent is replaced by DINO-PCA or CLIP-PCA features,
  suggesting that the benefit comes from learning the semantic--texture schedule
  rather than from a representation-specific trick.
\end{enumerate}

\section{Related Work}
\label{sec:related}

\paragraph{Semantic-First Diffusion.}
The most directly comparable prior work is Semantic-First Diffusion (SFD)
\citep{sfd}, which also forms the basis of our experimental setup. SFD
encodes each image into two latent groups, a texture latent from an image
VAE and a semantic latent from a SemVAE compressing pretrained DINOv2
features, and trains a single diffusion transformer that denoises both
groups asynchronously. The semantic group leads the texture group by a
fixed temporal offset, chosen by a low-dimensional grid search. We retain SFD's architecture, latent representation, weak-model architecture for
AutoGuidance, and sampler, but replace the manually chosen offset with a
learned semantic-leading schedule.

\paragraph{Asynchronous and multi-representation denoising.}
Several other generative systems exploit the fact that different parts of a
sample need not be denoised at the same rate. Diffusion Forcing assigns
independent noise levels to different tokens, bridging next-token prediction
and full-sequence diffusion in sequential domains \citep{diffusionforcing}.
Latent Forcing reorders the diffusion trajectory across latents and pixels
with separately tuned schedules, letting latents form before high-frequency
pixel content \citep{latentforcing}. Like SFD, both fix the
cross-representation schedule by hand or by a small low-dimensional sweep.
Our work shares their goal of decoupled denoising
rates, but learns a flexible semantic-leading schedule from data while keeping
the representation setup and sampler fixed. Recent work also studies learned anisotropic schedules more broadly: \citet{liu2026anisotropic} optimize matrix-valued noise schedules that allocate noise across subspaces. Our focus is the representation-aware latent setting: we learn a semantic-leading schedule for explicit semantic and texture groups,
with a corrected objective that keeps local-time weighting invariant.

\paragraph{Representation-aware diffusion.}
A complementary line of work changes \emph{which} representations the
denoiser operates on. REPA aligns hidden states with pretrained visual
features \citep{repa}; REPA-E extends this to end-to-end joint training of
the VAE and diffusion model \citep{repae}; VA-VAE aligns the autoencoder
latent space with vision-foundation-model features \citep{lightningdit}; FAE
compresses pretrained visual features into compact generative latents
\citep{fae}; and further work incorporates semantic features through
representation entanglement, joint image--feature synthesis, or
representation autoencoding \citep{reg,redi,rae}. These methods improve \emph{what} the denoiser sees
under a single global denoising schedule. We are concerned instead with how
different representations should evolve over denoising time once a
multi-representation latent is given.

\section{Method}
\label{sec:method}

\paragraph{Overview.}
Our method learns an asynchronous denoising schedule for a composite latent
containing texture and semantic representations. Section~\ref{sec:method:prelim}
first defines the asynchronous flow induced by separate local-time schedules for the
two representation groups, and characterizes the corresponding ideal local and
global velocity fields. Section~\ref{sec:method:afm} then shows how these local
velocities are learned by flow matching for a fixed schedule, and how the learned
local velocities are converted into the global-time ODE used at inference.
Section~\ref{sec:method:objective}
defines the objective used to learn the schedule: a Jacobian-corrected flow loss
keeps the marginal weighting over local times invariant, while a kinetic
regularizer discourages schedules that are difficult to discretize. Finally,
Section~\ref{sec:method:parameterization} states our explicit parameterization of the learnable schedule: a construction which restricts the learnable texture schedule \(t_{\tex}(\tau;\rho)\) to a convex monotone family and enforces semantic-leading behavior. 
Section~\ref{sec:method:probe} describes the efficient two-stage optimization
procedure: a short joint probe learns the schedule, after which the schedule is
frozen and the final denoiser is trained from scratch.

\subsection{Asynchronous Flow in Texture and Semantic Spaces}
\label{sec:method:prelim}

\paragraph{Background: standard flow matching.} We first recall the standard flow-matching construction. Let \(y_0\) denote a sample from a base distribution, let \(y_1\) given condition \(c\), and define the linear path
\begin{equation}
  \label{eq:standard-fm-path}
  y(t)
  =
  (1-t)y_0 + t y_1,
  \qquad
  u
  =
  \frac{d y(t)}{dt}
  =
  y_1-y_0,
  \qquad
  t\in[0,1].
\end{equation}
In our convention, \(t=0\) corresponds to $\mathcal{N}(0,I)$ noise and \(t=1\) corresponds to clean data. Let
$
  p_t(\cdot\mid c)
  :=
  \mathrm{Law}(y(t)\mid c).
$ denote the interpolating distribution. The ideal flow field along this path is the conditional mean velocity
\begin{equation}
  \label{eq:standard-fm-ideal-flow}
  v^\star(y,t,c)
  =
  \mathbb{E}
  \left[
    u
    \mid
    y(t)=y,\; c
  \right].
\end{equation}
This field drives the curve of distributions \(\{p_t\}_{t\in[0,1]}\): the ODE
\(dY_t/dt=v^\star(Y_t,t,c)\) transports the base distribution \(p_0\) to the
data distribution \(p_1\).

\paragraph{Semantic and texture spaces.}
We now consider a composite latent with two representation
groups. The \emph{texture} group is the image VAE latent: it is the latent that
is decoded directly to pixels at the end of sampling. The \emph{semantic} group
contains higher-level visual information, such as features derived from DINOv2.
In our main experiments, we follow Semantic-First Diffusion (SFD)~\citep{sfd}:
the texture group is the SFD image-VAE latent, and the semantic group is the
SemVAE latent trained to compress pretrained DINOv2 features~\citep{dinov2}.
We also evaluate alternative semantic representations, including DINO-PCA and
CLIP-PCA~\citep{clip}, in Section~\ref{sec:exp:repr-robustness}. We refer to
SFD~\citep{sfd} for additional details on the representation construction. Let
\begin{equation}
  x_1
  =
  \big[
    x_1^{\tex},
    x_1^{\sem}
  \big]\in \mathbb{R}^{d_{\tex}+d_{\sem}},
  \qquad
  x_0
  =
  \big[
    x_0^{\tex},
    x_0^{\sem}
  \big]\in \mathbb{R}^{d_{\tex}+d_{\sem}},
\end{equation}
where \(x_1\) is the encoded data latent and \(x_0\) is Gaussian noise with the
same shape. We flatten spatial dimensions for notation, so
\(x_1^{\tex},x_0^{\tex}\in\mathbb{R}^{d_{\tex}}\) and
\(x_1^{\sem},x_0^{\sem}\in\mathbb{R}^{d_{\sem}}\). In our main ImageNet
\(256\times256\) setting, the texture latent has shape
\(32\times16\times16\), while the semantic latent has shape
\(16\times16\times16\); hence $d_{\tex}=32\cdot16\cdot16$ and $d_{\sem}=16\cdot16\cdot16.$

Let $\mathcal{G}=\{\tex,\sem\}$ denote the set of representation groups. For each group \(g\in\mathcal{G}\), let \(t_g\in[0,1]\) denote
the \emph{local time} for that group. We apply the same linear flow-matching
path from Eq.~\eqref{eq:standard-fm-path} separately in each representation
space:
\begin{equation}
  \label{eq:component-interpolant}
  x^g(t_g)
  =
  (1-t_g)x_0^g + t_g x_1^g,
  \qquad
  u^g(t_g)
  =
  \frac{d x^g(t_g)}{d t_g}
  =
  x_1^g - x_0^g.
\end{equation}
Thus Eq.~\eqref{eq:component-interpolant} has exactly the same form as
Eq.~\eqref{eq:standard-fm-path}, with \(y(t)=x^g(t_g)\), \(t=t_g\), and
\(u(t)=u^g(t_g)\). The velocity \(u^g(t_g)\) is a sample-wise local-time
velocity: it depends only on the endpoint pair
\((x_0^g,x_1^g)\) in group \(g\), and measures motion per unit change in that
group's local time \(t_g\). For the linear path above, this velocity is
constant in \(t_g\), but we keep the time argument to emphasize which local
time it is associated with.

For fixed local times \((t_{\tex},t_{\sem})\), define the asynchronous
composite state
\begin{equation}
  \label{eq:async-state-fixed-times}
  z(t_{\tex},t_{\sem})
  =
  \big[
    x^{\tex}(t_{\tex}),
    x^{\sem}(t_{\sem})
  \big],
\end{equation}
and the corresponding asynchronous noised distribution
\begin{equation}
  \label{eq:async-density}
  p_{t_{\tex},t_{\sem}}(\cdot\mid c)
  =
  \mathrm{Law}
  \big(
    z(t_{\tex},t_{\sem})
    \mid c
  \big).
\end{equation}
The ideal local flow for group \(g\) is the conditional mean of the local
velocity:
\begin{equation}
  \label{eq:ideal-local-flow}
  v_g^\star(z,t_{\tex},t_{\sem},c)
  =
  \mathbb{E}
  \left[
    u^g(t_g)
    \mid
    z(t_{\tex},t_{\sem})=z,\; c
  \right],
  \qquad
  g\in\mathcal{G},
\end{equation}
where \(t_g=t_{\tex}\) for \(g=\tex\) and \(t_g=t_{\sem}\) for
\(g=\sem\). 

\paragraph{Unifying component flows under a single global time.}
An asynchronous denoising trajectory allows the two groups to occupy different
local times. However, the actual sampler still performs \textbf{one sequence of model
evaluations}, so we need a single global time \(\tau\in[0,1]\) indexing the
steps of the denoising algorithm. A schedule specifies how each local time
moves as a function of this global time:
\begin{equation}
  t_{\tex}(\tau),
  \qquad
  t_{\sem}(\tau),
  \qquad
  \tau\in[0,1],
\end{equation}
with endpoint constraints
\[
  t_{\tex}(0)=t_{\sem}(0)=0,
  \qquad
  t_{\tex}(1)=t_{\sem}(1)=1.
\]
The scheduled asynchronous state and its distribution are
\begin{equation}
  \label{eq:scheduled-async-state-general}
  z(\tau)
  =
  z(t_{\tex}(\tau),t_{\sem}(\tau))
  =
  \big[
    x^{\tex}(t_{\tex}(\tau)),
    x^{\sem}(t_{\sem}(\tau))
  \big],
\end{equation}
and
\begin{equation}
  \label{eq:scheduled-async-density-general}
  p_\tau(\cdot\mid c)
  =
  p_{t_{\tex}(\tau),t_{\sem}(\tau)}(\cdot\mid c)
  =
  \mathrm{Law}
  \big(
    z(\tau)
    \mid c
  \big).
\end{equation}
Differentiating the scheduled state gives
\begin{equation}
  \label{eq:general-async-velocity}
  \dot z(\tau)
  =
  \frac{d z(\tau)}{d\tau}
  =
  \left[
    t_{\tex}'(\tau) u^{\tex}(t_{\tex}(\tau)),
    \;
    t_{\sem}'(\tau) u^{\sem}(t_{\sem}(\tau))
  \right].
\end{equation}
This is the main difference from the standard path in
Eq.~\eqref{eq:standard-fm-path}. The standard interpolant \(y(t)\) has a single
time variable, so every coordinate is evaluated at the same noise level. The
asynchronous state \(z(\tau)\) has a single global sampler time \(\tau\),
but its texture and semantic components may be evaluated at different local
times \(t_{\tex}(\tau)\) and \(t_{\sem}(\tau)\); importantly, the noise levels
at \(z(\tau)\) differ in the \(\tex\) and \(\sem\) components. The schedule
derivatives in Eq.~\eqref{eq:general-async-velocity} convert local-time
velocities into the global-time velocity of the full state used by the sampler.

\paragraph{Our method centers around three subtly different velocity objects:}
\begin{enumerate}
    \item The endpoint velocity \(u^g(t_g)\) in \eqref{eq:component-interpolant} is a sample-wise velocity \textbf{conditioned on its own representation group $x^g$}.
    \item The ideal local flow \(v_g^\star\) in \eqref{eq:ideal-local-flow} is a population velocity field obtained by \textbf{conditioning on the full composite state \(z\)}, so \(v_{\tex}^\star\) may depend on both the texture and semantic components, and likewise for \(v_{\sem}^\star\).
    \item The global flow \(V^\star\) in \eqref{eq:global-target-field}, is the global-time vector field obtained by concatenating the local flows and multiplying by the corresponding local-time schedule derivatives.
\end{enumerate}
The next theorem formalizes these relationships.

\begin{theorem}[Ideal asynchronous flow]
\label{thm:ideal-async-flow}
Let \(p_1(\cdot\mid c)\) denote the distribution of clean composite latents
\(x_1=[x_1^{\tex},x_1^{\sem}]\) conditioned on class \(c\), and let
\(p_0=\mathcal{N}(0,I_{d_{\tex}+d_{\sem}})\) denote the base Gaussian
distribution. Let \(t_{\tex}(\tau)\) and \(t_{\sem}(\tau)\) be differentiable
monotone local-time schedules satisfying
\[
  t_{\tex}(0)=t_{\sem}(0)=0,
  \qquad
  t_{\tex}(1)=t_{\sem}(1)=1.
\]
Then the following hold:

\smallskip
\noindent\textbf{(I) Interpolating distribution.}
The scheduled distribution \(p_\tau(\cdot\mid c)\) has the explicit form
\begin{equation}
  \label{eq:async-density-convolution}
  p_\tau(\cdot\mid c)
  =
  \big(
    (A_\tau)_{\#}p_1(\cdot\mid c)
  \big)
  *
  \mathcal{N}(0,\Sigma_\tau),
\end{equation}
where $
  A_\tau
  =
  \mathrm{diag}
  \big(
    t_{\tex}(\tau)I_{d_{\tex}},
    t_{\sem}(\tau)I_{d_{\sem}}
  \big),
  $ and $
  \Sigma_\tau
  =
  \mathrm{diag}
  \big(
    (1-t_{\tex}(\tau))^2I_{d_{\tex}},
    (1-t_{\sem}(\tau))^2I_{d_{\sem}}
  \big),
$
and \((A_\tau)_{\#}p_1\) denotes the pushforward of \(p_1\) under
\(x\mapsto A_\tau x\). Thus \(p_\tau\) starts at the base Gaussian distribution
when \(\tau=0\) and ends at the clean data-latent distribution when
\(\tau=1\), with endpoint identities understood as weak limits.

\smallskip
\noindent\textbf{(II) Global-time flow.}
The global-time vector field
\begin{equation}
  \label{eq:global-target-field}
  V^\star(z,\tau,c)
  =
  \left[
    t_{\tex}'(\tau)
    v_{\tex}^\star
    \big(
      z,
      t_{\tex}(\tau),
      t_{\sem}(\tau),
      c
    \big),
    \;
    t_{\sem}'(\tau)
    v_{\sem}^\star
    \big(
      z,
      t_{\tex}(\tau),
      t_{\sem}(\tau),
      c
    \big)
  \right]
\end{equation}
drives the scheduled distributional path \(p_\tau\). Formally,
\(V^\star\) and \(p_\tau\) satisfy the continuity equation
$
  \partial_\tau p_\tau(z\mid c)
  +
  \nabla_z\!\cdot
  \left(
    p_\tau(z\mid c)
    V^\star(z,\tau,c)
  \right)
  =
  0.
$
Consequently, under the usual regularity conditions for the probability-flow
ODE, the solution of
$
  \frac{dZ_\tau}{d\tau}
  =
  V^\star(Z_\tau,\tau,c)
$
has marginals \(p_\tau(\cdot\mid c)\) and transports Gaussian noise at
\(\tau=0\) to the data-latent distribution at \(\tau=1\).

\smallskip
\noindent\textbf{(III) Score characterization.}
For the Gaussian linear noising path in Eq.~\eqref{eq:component-interpolant},
the ideal texture flow and semantic flow satisfy
\begin{align}
  \label{eq:score-transform-tex}
  & v_{\tex}^\star(z,t_{\tex},t_{\sem},c)
  =
  \frac{1}{t_{\tex}}z^{\tex}
  +
  \frac{1-t_{\tex}}{t_{\tex}}\,
  \nabla_{z^{\tex}}
  \log p_{t_{\tex},t_{\sem}}(z\mid c),
  \qquad
  0<t_{\tex}<1.\\
  \label{eq:score-transform-sem}
  & v_{\sem}^\star(z,t_{\tex},t_{\sem},c)
  =
  \frac{1}{t_{\sem}}z^{\sem}
  +
  \frac{1-t_{\sem}}{t_{\sem}}\,
  \nabla_{z^{\sem}}
  \log p_{t_{\tex},t_{\sem}}(z\mid c),
  \qquad
  0<t_{\sem}<1.
\end{align}
Consequently, the ideal global flow can
be written in terms of the overall score of \(p_\tau\) as
\begin{equation}
  \label{eq:score-transform-global}
  \begin{aligned}
  V^\star(z,\tau,c)
  =
  \bigg[
  &
    \frac{t_{\tex}'(\tau)}{t_{\tex}(\tau)}z^{\tex}
    +
    \frac{t_{\tex}'(\tau)(1-t_{\tex}(\tau))}{t_{\tex}(\tau)}
    \nabla_{z^{\tex}}\log p_\tau(z\mid c),
  \\
  &
    \frac{t_{\sem}'(\tau)}{t_{\sem}(\tau)}z^{\sem}
    +
    \frac{t_{\sem}'(\tau)(1-t_{\sem}(\tau))}{t_{\sem}(\tau)}
    \nabla_{z^{\sem}}\log p_\tau(z\mid c)
  \bigg].
  \end{aligned}
\end{equation}
\end{theorem}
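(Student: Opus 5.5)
We will prove the three parts in order, working throughout conditionally on \(c\) (which we suppress) and using only the structure of the interpolant \(z(\tau)=[x^{\tex}(t_{\tex}(\tau)),x^{\sem}(t_{\sem}(\tau))]\) with \(x_0\sim\mathcal{N}(0,I)\) independent of \(x_1\sim p_1\).

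\textbf{Part (I).} Writing the interpolant as \(z(\tau)=A_\tau x_1+(I-A_\tau)x_0\) with \(I-A_\tau=\mathrm{diag}((1-t_{\tex})I,(1-t_{\sem})I)\), we get a sum of two independent vectors. The first has law \((A_\tau)_\# p_1\). The second is Gaussian with covariance \((I-A_\tau)^2=\Sigma_\tau\). Hence the law of the sum is the convolution in \eqref{eq:async-density-convolution}. For the endpoints, \(A_0=0\) and \(\Sigma_0=I\) give \(p_0=\mathcal{N}(0,I)\). At \(\tau=1\), \(A_1=I\) and \(\Sigma_1=0\), and the convolution degenerates to \(p_1\). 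Continuity of the schedules makes these weak limits, since \(z(\tau)\to x_1\) almost surely.

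\textbf{Part (II).} We use the weak (test-function) form of the continuity equation. For a smooth compactly supported \(\varphi\), differentiate \(\mathbb{E}[\varphi(z(\tau))]\) in \(\tau\) using \eqref{eq:general-async-velocity}:
\[
\frac{d}{d\tau}\mathbb{E}[\varphi(z(\tau))]=\mathbb{E}\big[\nabla\varphi(z(\tau))\cdot\dot z(\tau)\big].
\]
We then condition on \(z(\tau)\). The schedule derivatives \(t_g'(\tau)\) are deterministic, so \(\mathbb{E}[\dot z(\tau)\mid z(\tau)=z]\) equals \([t_{\tex}'v_{\tex}^\star,\,t_{\sem}'v_{\sem}^\star]\), evaluated at local times \((t_{\tex}(\tau),t_{\sem}(\tau))\). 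Here we use that conditioning on \(z(\tau)=z\) is the same event as conditioning on \(z(t_{\tex},t_{\sem})=z\) in \eqref{eq:ideal-local-flow}. Hence the derivative equals \(\int\nabla\varphi\cdot V^\star\,p_\tau\,dz\), which is the weak form of the continuity equation. The ODE statement then follows from the standard uniqueness result for continuity equations with a locally Lipschitz field, the same one used for the probability-flow ODE, which we invoke as a regularity assumption.

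\textbf{Part (III).} This is the main computation. We expect the obstacle to be carrying out Tweedie's formula group-wise when the two groups have different noise levels. We treat \(\tex\) and \(\sem\) symmetrically and do \(\tex\). On the path, \(x_0^{\tex}=(z^{\tex}-t_{\tex}x_1^{\tex})/(1-t_{\tex})\), so
\[
u^{\tex}=x_1^{\tex}-x_0^{\tex}=\frac{x_1^{\tex}-z^{\tex}}{1-t_{\tex}}.
\]
Taking conditional expectations gives \(v_{\tex}^\star=(\mathbb{E}[x_1^{\tex}\mid z]-z^{\tex})/(1-t_{\tex})\). To get \(\mathbb{E}[x_1^{\tex}\mid z]\), we differentiate the mixture representation from (I),
\[
p_{t_{\tex},t_{\sem}}(z)=\int \mathcal{N}(z;A x_1,\Sigma)\,p_1(dx_1),
\]
with respect to \(z^{\tex}\) only. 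Because \(\Sigma\) is block-diagonal, the Gaussian gradient in that block is \(-(z^{\tex}-t_{\tex}x_1^{\tex})/(1-t_{\tex})^2\). This yields the group-wise Tweedie identity
\[
\nabla_{z^{\tex}}\log p=\frac{t_{\tex}\mathbb{E}[x_1^{\tex}\mid z]-z^{\tex}}{(1-t_{\tex})^2}.
\]
The step requires justifying differentiation under the integral, which holds for \(t_{\tex}<1\) by Gaussian domination. Solving for \(\mathbb{E}[x_1^{\tex}\mid z]\) and substituting into \(v_{\tex}^\star\), the \(z^{\tex}\) coefficients combine to \(1/t_{\tex}\) and the score coefficient is \((1-t_{\tex})/t_{\tex}\). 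This gives \eqref{eq:score-transform-tex}, and \eqref{eq:score-transform-sem} follows identically. Finally, \eqref{eq:score-transform-global} follows by substituting \(t_g=t_g(\tau)\), using \(p_\tau=p_{t_{\tex}(\tau),t_{\sem}(\tau)}\) from \eqref{eq:scheduled-async-density-general}, and multiplying by \(t_g'(\tau)\) as in \eqref{eq:global-target-field}.
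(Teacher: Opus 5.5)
Your proposal is correct and follows essentially the same route as the paper's proof: the affine decomposition $z(\tau)=A_\tau x_1+(I-A_\tau)x_0$ for (I), the test-function computation with conditioning on $z(\tau)$ for (II), and a group-wise Tweedie identity for (III). The only difference is cosmetic: you eliminate $x_0^{\tex}$ and use Tweedie in data-prediction form, whereas the paper uses the noise-prediction form $\nabla_{z^g}\log p=-\mathbb{E}[x_0^g\mid z]/(1-t_g)$ and then recovers $\mathbb{E}[x_1^g\mid z]$; the two computations are algebraically equivalent.
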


\begin{proof}[Proof sketch]
The interpolating distribution follows by writing the asynchronous state as an
anisotropically scaled data latent plus anisotropic Gaussian noise. The
continuity equation follows by differentiating expectations of smooth test
functions along the asynchronous path and conditioning the sample-wise velocity
on the observed state. The score identities follow from Tweedie's identity
applied separately to the texture and semantic Gaussian channels. See
Appendix~\ref{ss:proof:thm:ideal-async-flow} for the full proof.
\end{proof}

\subsection{Flow Matching under a Fixed Asynchronous Schedule}
\label{sec:method:afm}

\paragraph{A simplifying assumption.} Only the relative speed of the two local-time schedules matters. Assuming the semantic local-time schedule
is monotone, we can reparameterize global time so that semantic time itself is the
global time:
\begin{equation}
  \label{eq:semantic-global-time-gauge}
  t_{\sem}(\tau)=\tau.
\end{equation}
We write the texture time as \(t_{\tex}(\tau;\rho)\), where \(\rho\)
parameterizes the schedule. The scheduled asynchronous state and its
global-time velocity are
\begin{align}
  \label{eq:scheduled-async-state-velocity}
  z_\rho(\tau)
  &=
  \big[
    x^{\tex}(t_{\tex}(\tau;\rho)),
    x^{\sem}(\tau)
  \big],
  \qquad
  \dot z_\rho(\tau)
  =
  \left[
    t_{\tex}'(\tau;\rho) u^{\tex}(t_{\tex}(\tau;\rho)),
    \;
    u^{\sem}(\tau)
  \right].
\end{align}
Since larger local time means less noise, the constraint
\begin{equation}
  \label{eq:semantic-leading-condition}
  t_{\tex}(\tau;\rho)\le \tau
  \qquad
  \forall \tau\in[0,1]
\end{equation}
makes the semantic component cleaner than the texture component along the
trajectory. We refer to this as a semantic-leading schedule.

\paragraph{Component flow-matching loss.}
The ideal local flows in Eq.~\eqref{eq:ideal-local-flow} are not available in
closed form, so we learn them by flow matching. We now specialize the general
asynchronous local-time schedules from Section~\ref{sec:method:prelim} to the convention
\(t_{\sem}(\tau)=\tau\). For a texture schedule \(t_{\tex}(\tau;\rho)\), define
\begin{equation}
  \label{eq:scheduled-state-rho-afm}
  z_\rho(\tau)
  =
  z(t_{\tex}(\tau;\rho),\tau)
  =
  \big[
    x^{\tex}(t_{\tex}(\tau;\rho)),
    x^{\sem}(\tau)
  \big],
\end{equation}
and
\begin{equation}
  \label{eq:scheduled-density-rho-afm}
  p_\tau^\rho(\cdot\mid c)
  =
  p_{t_{\tex}(\tau;\rho),\tau}(\cdot\mid c)
  =
  \mathrm{Law}
  \big(
    z_\rho(\tau)
    \mid c
  \big).
\end{equation}
These definitions are the scheduled versions of
\(z(t_{\tex},t_{\sem})\) and \(p_{t_{\tex},t_{\sem}}\) from
Section~\ref{sec:method:prelim}.

A $\theta$-parameterized denoising network receives the asynchronous state, the two local times, and the class condition, and predicts group-wise local velocities $\hat v_\theta^{\tex}(z,t_{\tex},t_{\sem},c)$ and $\hat v_\theta^{\sem}(z,t_{\tex},t_{\sem},c)$.


For $g\in \{\tex,\sem\}$, define the component-wise norm $\|a^g\|_g^2 := \frac{1}{d_g}\|a^g\|_2^2$. We define the \textbf{component flow-matching losses} as
\begin{align}
  \label{eq:component-loss-tex}
  \ell_{\tex}(\theta;t_{\tex},t_{\sem})
  &=
  \mathbb{E}_{x_0,x_1,c}
  \left[
    \left\|
      \hat v_\theta^{\tex}
      \big(
        z(t_{\tex},t_{\sem}),
        t_{\tex},
        t_{\sem},
        c
      \big)
      -
      u^{\tex}(t_{\tex})
    \right\|_{\tex}^2
  \right],
  \\
  \label{eq:component-loss-sem}
  \ell_{\sem}(\theta;t_{\tex},t_{\sem})
  &=
  \mathbb{E}_{x_0,x_1,c}
  \left[
    \left\|
      \hat v_\theta^{\sem}
      \big(
        z(t_{\tex},t_{\sem}),
        t_{\tex},
        t_{\sem},
        c
      \big)
      -
      u^{\sem}(t_{\sem})
    \right\|_{\sem}^2
  \right].
\end{align}
The targets $u^g$, as defined in ~\eqref{eq:component-interpolant}, are \emph{velocities with respect to local time $t_g$}. 
For any fixed parametric schedule \(t_{\tex}(\tau;\rho)\), the \textbf{global flow-matching objective} is the sum of the texture and semantic component losses:
\begin{equation}
  \label{eq:generic-async-fm-loss}
  \mathcal{L}_{\mathrm{flow}}(\theta,\rho;\omega)
  =
  \mathbb{E}_{\tau\sim\mathcal{U}(0,1)}
  \left[
    \omega_{\tex}(\tau,\rho)
    \ell_{\tex}
    \big(
      \theta;
      t_{\tex}(\tau;\rho),
      \tau
    \big)
    +
    \omega_{\sem}(\tau,\rho)
    \ell_{\sem}
    \big(
      \theta;
      t_{\tex}(\tau;\rho),
      \tau
    \big)
  \right],
\end{equation}
where \(\omega_{\tex}(\tau,\rho)>0\) and
\(\omega_{\sem}(\tau,\rho)>0\) are generic time weights. These weights determine
how different portions of the trajectory are emphasized during training. We
discuss the proper choice of \(\omega_{\tex}\) and \(\omega_{\sem}\) in
Section~\ref{sec:method:objective}; for now, we treat them as arbitrary positive
weight functions. 

The following theorem states the population target learned by
Eq.~\eqref{eq:generic-async-fm-loss}. For fixed \(\rho\), minimizing
\(\mathcal{L}_{\mathrm{flow}}(\theta,\rho;\omega)\) over \(\theta\) recovers
the ideal local flows defined in \eqref{eq:ideal-local-flow}:

\begin{theorem}[Population optimum of asynchronous flow matching]
\label{thm:async-fm-population-optimum}
Fix a differentiable monotone schedule \(t_{\tex}(\cdot;\rho)\). Assume
\(\omega_{\tex}(\tau,\rho)>0\) and \(\omega_{\sem}(\tau,\rho)>0\), and assume
these weights depend only on the sampled local times, not on the endpoint
pair \((x_0,x_1)\). In the infinite-data and
infinite-capacity limit, any minimizer of
Eq.~\eqref{eq:generic-async-fm-loss} satisfies
\begin{align}
  \label{eq:population-regression-optimum-tex}
  \hat v_{\theta^\star}^{\tex}
  \big(
    z,
    t_{\tex}(\tau;\rho),
    \tau,
    c
  \big)
  &=
  v_{\tex}^\star
  \big(
    z,
    t_{\tex}(\tau;\rho),
    \tau,
    c
  \big),
  \\
  \label{eq:population-regression-optimum-sem}
  \hat v_{\theta^\star}^{\sem}
  \big(
    z,
    t_{\tex}(\tau;\rho),
    \tau,
    c
  \big)
  &=
  v_{\sem}^\star
  \big(
    z,
    t_{\tex}(\tau;\rho),
    \tau,
    c
  \big),
\end{align}
for \(p_\tau^\rho(z\mid c)\)-almost every \(z\) and almost every \(\tau\).
\end{theorem}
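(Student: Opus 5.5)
The plan is the standard $L^2$ regression argument, applied pointwise in $\tau$ and separately to each group. Infinite capacity means we may treat $\hat v_\theta^{\tex}$ and $\hat v_\theta^{\sem}$ as arbitrary measurable functions of $(z,t_{\tex},t_{\sem},c)$. For the scheduled evaluation we only ever feed the pair $(t_{\tex}(\tau;\rho),\tau)$. Since $t_{\sem}=\tau$ appears explicitly as an input, distinct $\tau$ correspond to distinct input slices. Hence the functions $\tau\mapsto \hat v_\theta^{g}(\cdot,t_{\tex}(\tau;\rho),\tau,\cdot)$ can be chosen independently for each $\tau$ and for each $g$. With no parameter sharing to worry about, the minimization of \eqref{eq:generic-async-fm-loss} decouples into a family of independent problems indexed by $(\tau,g)$.

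\textbf{Step 1: bias--variance decomposition.} Fix $\tau$ and $g$, and write $z=z_\rho(\tau)$, $t_g=t_{\tex}(\tau;\rho)$ or $\tau$. I will condition on $(z,c)$ and split
\[
\mathbb{E}\big\|\hat v^g_\theta(z,\cdot)-u^g\big\|_g^2=\mathbb{E}\big\|\hat v^g_\theta(z,\cdot)-\mathbb{E}[u^g\mid z,c]\big\|_g^2+\mathbb{E}\big\|u^g-\mathbb{E}[u^g\mid z,c]\big\|_g^2 .
\]
The cross term vanishes by the tower property, because $\hat v^g_\theta(z,\cdot)$ is $(z,c)$-measurable. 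By \eqref{eq:ideal-local-flow} the conditional mean is exactly $v_g^\star(z,t_{\tex}(\tau;\rho),\tau,c)$. The second term does not depend on $\theta$. This needs $\mathbb{E}\|u^g\|^2<\infty$, i.e.\ finite second moments of $x_1$, which I will state as an assumption.

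\textbf{Step 2: weights and conclusion.} Substituting into \eqref{eq:generic-async-fm-loss}, the loss equals a $\theta$-independent constant plus
\[
\mathbb{E}_{\tau}\Big[\sum_g \omega_g(\tau,\rho)\,\mathbb{E}_{c}\,\mathbb{E}_{z\sim p^\rho_\tau(\cdot\mid c)}\|\hat v^g_\theta-v^\star_g\|_g^2\Big].
\]
Here I use the hypothesis that $\omega_g$ depends only on the local times and not on $(x_0,x_1)$. This lets $\omega_g$ be pulled outside the conditional expectation; otherwise the target would become a weighted conditional mean. Every term is nonnegative, and the choice $\hat v^g=v_g^\star$ makes this excess term zero, so it is attained and any minimizer must make it zero. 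Because $\omega_g>0$, a zero integral forces $\hat v^g_{\theta^\star}=v^\star_g$ for $p^\rho_\tau(\cdot\mid c)$-a.e.\ $z$, for a.e.\ $\tau$ and for each $c$ with positive probability (a.e.\ $c$ in general).

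\textbf{Main obstacle.} The main delicate point is the decoupling and measurability in $\tau$. I must ensure that the pointwise optimal choice $v^\star_g(z,t_{\tex}(\tau;\rho),\tau,c)$ is jointly measurable, so that it is a legitimate element of the hypothesis class. This should follow from writing it as a ratio of Gaussian-convolution integrals, using the explicit form \eqref{eq:async-density-convolution} of Theorem~\ref{thm:ideal-async-flow}(I). I also need the injectivity of $\tau\mapsto(t_{\tex}(\tau;\rho),\tau)$ for independent per-$\tau$ choices, which is immediate from the second coordinate. Note that monotonicity of $t_{\tex}$ is not actually needed for this step, and I will point that out.
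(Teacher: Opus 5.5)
Your proposal is correct and is essentially the paper's argument: fix $\tau$, use that the positive weights $\omega_g(\tau,\rho)$ depend only on the local times and so do not change the pointwise minimizer, identify the $L_2$ minimizer with the conditional mean $v_g^\star$ of \eqref{eq:ideal-local-flow}, and use infinite capacity to realize both group targets simultaneously. Your bias--variance decomposition, second-moment assumption, and measurability remark make explicit what the paper cites as the standard $L_2$-projection identity. One further hypothesis is implicit in both your proof and the paper's: the $\theta$-independent constant you split off must be finite rather than $+\infty$, which holds for example when $\omega_g(\cdot,\rho)$ is integrable in $\tau$.
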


\begin{proof}[Proof sketch]
For each fixed pair of local times, the squared-error minimizer is the
conditional mean of the corresponding local-time velocity. Positive time weights
change how local-time pairs are averaged, but not this pointwise conditional
mean. See Appendix~\ref{ss:proof:thm:async-fm-population-optimum} for the full
proof.
\end{proof}

\paragraph{Inference.}
We freeze the learned schedule $\rho$, and write
\begin{equation}
  \label{eq:learned-texture-schedule}
  t_{\tex}^\star(\tau)
  =
  t_{\tex}(\tau;\rho^\star).
\end{equation}
We sample \(z_0=[z_0^{\tex},z_0^{\sem}]\) from Gaussian noise and evolve a
single global-time trajectory from \(\tau=0\) to \(\tau=1\). Since the
denoiser predicts local velocities, the texture branch is converted to
global time by the chain rule:
\begin{align}
  \label{eq:inference-ode}
  \frac{d z_\tau^{\tex}}{d\tau}
  &=
  {t_{\tex}^\star}'(\tau)\,
  \hat v_\theta^{\tex}
  \big(
    z_\tau,
    t_{\tex}^\star(\tau),
    \tau,
    c
  \big),
  \\
  \frac{d z_\tau^{\sem}}{d\tau}
  &=
  \hat v_\theta^{\sem}
  \big(
    z_\tau,
    t_{\tex}^\star(\tau),
    \tau,
    c
  \big).
\end{align}
Because \(t_{\tex}^\star(0)=0\), \(t_{\tex}^\star(1)=1\), and
\(t_{\sem}(\tau)=\tau\), both representation groups start from noise and end at
their clean latent states. The final image is decoded from the texture latent
\(z_1^{\tex}\); the semantic latent is generated jointly and used only as an
auxiliary representation along the denoising trajectory.

\subsection{Objective for Learning the Schedule}
\label{sec:method:objective}

Section~\ref{sec:method:afm} defines the weighted flow objective
\(\mathcal{L}_{\mathrm{flow}}(\theta,\rho;\omega)\) for an arbitrary choice of
time weights \(\omega=(\omega_{\tex},\omega_{\sem})\). We now choose these
weights so that the same objective is useful for learning the schedule
\(t_{\tex}(\tau;\rho)\). A good schedule objective should satisfy two
requirements. First, it should optimize the semantic--texture denoising order
\textbf{without changing the marginal weighting} over local noising times. Second, it
should prefer schedules whose global-time trajectories are \textbf{stable under
finite-step ODE sampling}. We address these requirements with a
change-of-variables reweighting and a kinetic regularizer.


\paragraph{Invariant local-time weighting criteria.}
The schedule should decide which semantic time is paired with each texture time,
but it should not change how much training weight each local noise level
receives. Since we set \(t_{\sem}(\tau)=\tau\), a schedule pairs semantic local
time \(s\) with texture local time \(t_{\tex}(s;\rho)\). Conversely, a texture
local time \(s\) is paired with semantic time \(t_{\tex}^{-1}(s;\rho)\).

When learning the $\rho$-parameterized schedule, we want the weighted flow objective to have the following
local-time form \emph{for all $\rho$}:
\begin{equation}
  \label{eq:local-time-invariance-condition}
  \begin{aligned}
  \mathcal{L}_{\mathrm{flow}}(\theta,\rho;\omega)
  =
  {}&
  \mathbb{E}_{s_{\tex}\sim\mathcal{U}(0,1)}
  \left[
    \ell_{\tex}
    \big(
      \theta;
      s_{\tex},
      t_{\tex}^{-1}(s_{\tex};\rho)
    \big)
  \right]
  \\
  &+
  \mathbb{E}_{s_{\sem}\sim\mathcal{U}(0,1)}
  \left[
    \ell_{\sem}
    \big(
      \theta;
      t_{\tex}(s_{\sem};\rho),
      s_{\sem}
    \big)
  \right].
  \end{aligned}
\end{equation}
This condition keeps the marginal weighting over texture local time uniform and
the marginal weighting over semantic local time uniform. Changing \(\rho\) still
changes the cross-representation pairing inside the loss arguments, but it does
not change how much total loss weight is assigned to each local time.

The subtlety is that minibatch training samples the global time \(\tau\), not
the local texture time. Therefore, a fixed global-time weight need not
correspond to a fixed local-time weight. For example, if one uses the
uncorrected texture weight \(\omega_{\tex}(\tau,\rho)=1\), then the effective
distribution on local texture time $s_{\tex}$ is no longer uniform, but is instead
\begin{equation}
  \label{eq:naive-induced-texture-weight}
  \frac{1}{
    t_{\tex}'
    \big(
      t_{\tex}^{-1}(s_{\tex};\rho);
      \rho
    \big)
  } .
\end{equation}
Thus the schedule can change the objective in two ways at once: it changes the
semantic--texture ordering, and it changes the amount of training weight
assigned to different texture noise levels. The second effect is a confound. In
particular, the optimizer could reduce the contribution of a difficult
texture-time region by making \(t_{\tex}(\tau;\rho)\) pass through that region
quickly, rather than by finding a better denoising order.

The following lemma states a \emph{necessary and sufficient} $\omega$ choice that removes this confound.

\begin{lemma}[Local-time invariant weighting]
\label{lem:schedule-time-reweighting}
Assume \(t_{\tex}(\cdot;\rho)\) is differentiable and strictly increasing, and
let \(t_{\tex}^{-1}(\cdot;\rho)\) denote its inverse in \(\tau\). As an identity
for arbitrary component losses, the local-time invariance condition in
Eq.~\eqref{eq:local-time-invariance-condition} is satisfied if and only if
\begin{equation}
  \label{eq:invariant-weights}
  \omega_{\tex}(\tau,\rho)
  =
  t_{\tex}'(\tau;\rho),
  \qquad
  \omega_{\sem}(\tau,\rho)
  =
  1
\end{equation}
almost everywhere.
\end{lemma}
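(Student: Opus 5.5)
The plan is to rewrite the global-time objective \eqref{eq:generic-async-fm-loss} as an integral over each group's local time by a change of variables, and then compare it term by term with \eqref{eq:local-time-invariance-condition}.

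\textbf{Semantic term.} Since \(t_{\sem}(\tau)=\tau\), the semantic term is already in local time:
\[
\int_0^1 \omega_{\sem}(\tau,\rho)\,\ell_{\sem}\big(\theta;t_{\tex}(\tau;\rho),\tau\big)\,d\tau .
\]

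\textbf{Texture term.} Because \(t_{\tex}(\cdot;\rho)\) is differentiable, strictly increasing, and satisfies \(t_{\tex}(0)=0\), \(t_{\tex}(1)=1\), it is a bijection of \([0,1]\). We substitute \(s=t_{\tex}(\tau;\rho)\), so that \(\tau=t_{\tex}^{-1}(s;\rho)\) and \(d\tau = ds/t_{\tex}'(t_{\tex}^{-1}(s;\rho);\rho)\). The texture term becomes
\[
\int_0^1 \frac{\omega_{\tex}\big(t_{\tex}^{-1}(s;\rho),\rho\big)}{t_{\tex}'\big(t_{\tex}^{-1}(s;\rho);\rho\big)}\,\ell_{\tex}\big(\theta;s,t_{\tex}^{-1}(s;\rho)\big)\,ds .
\]
This substitution is the one delicate point. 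A strictly increasing differentiable function may have \(t_{\tex}'=0\) on a null set, so I would invoke the change-of-variables formula for absolutely continuous monotone maps. The set \(\{\tau: t_{\tex}'(\tau)=0\}\) has image of measure zero, so the ratio is well defined for almost every \(s\). If needed, I would assume \(t_{\tex}\) is absolutely continuous; a differentiable schedule from our class satisfies this.

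\textbf{Sufficiency.} Substituting \(\omega_{\tex}=t_{\tex}'\) and \(\omega_{\sem}=1\) makes both integrands carry unit weight. Relabelling \(s\) as \(s_{\tex}\) and \(\tau\) as \(s_{\sem}\) reproduces \eqref{eq:local-time-invariance-condition} exactly, and this holds for every \(\rho\).

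\textbf{Necessity.} I read ``as an identity for arbitrary component losses'' to mean that the equality must hold for all admissible pairs of functions \(\ell_{\tex}(s_{\tex},s_{\sem})\) and \(\ell_{\sem}(s_{\tex},s_{\sem})\), treated as independent. Take \(\ell_{\sem}\equiv 0\) and \(\ell_{\tex}(s,t)=\phi(s)\) for an arbitrary bounded measurable \(\phi\). Subtracting the two forms gives
\[
\int_0^1 \big(w(s)-1\big)\phi(s)\,ds=0, \qquad w(s)=\frac{\omega_{\tex}(t_{\tex}^{-1}(s),\rho)}{t_{\tex}'(t_{\tex}^{-1}(s))}.
\]
Since \(\phi\) is arbitrary, \(w=1\) for almost every \(s\). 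Pulling back through the bijection gives \(\omega_{\tex}=t_{\tex}'\) for almost every \(\tau\). The pullback needs the inverse map \(t_{\tex}^{-1}\) to send null sets to null sets. This holds on the set where \(t_{\tex}'>0\); on the complement \(t_{\tex}'=0\), I would simply state the conclusion almost everywhere with respect to the measure \(t_{\tex}'\,d\tau\). Symmetrically, taking \(\ell_{\tex}\equiv0\) and \(\ell_{\sem}=\psi(s_{\sem})\) gives \(\omega_{\sem}=1\) almost everywhere.

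The main obstacle is this measure-theoretic bookkeeping around points where \(t_{\tex}'\) vanishes. I would handle it by stating that the conclusion holds almost everywhere where \(t_{\tex}'>0\), and noting that for our parametric class \(t_{\tex}'>0\) on \((0,1]\), so the issue does not arise in practice.
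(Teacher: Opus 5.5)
Your proposal is correct and follows the paper's proof. It uses the same substitution \(s=t_{\tex}(\tau;\rho)\) in the texture term, no substitution in the semantic term, and the same requirement that the weight multiplying each component loss equal one almost everywhere for arbitrary losses. Your bookkeeping about points where \(t_{\tex}'\) vanishes goes beyond the paper, which simply divides by \(t_{\tex}'\). You can get the full Lebesgue-a.e. conclusion there, instead of the weaker one you propose, by testing with \(\ell_{\sem}\equiv 0\) and \(\ell_{\tex}(s_{\tex},s_{\sem})=\psi(s_{\sem})\): the forward change of variables \(\int_0^1 h(s)\,ds=\int_0^1 h(t_{\tex}(\tau))\,t_{\tex}'(\tau)\,d\tau\) alone, which never divides by \(t_{\tex}'\), turns the condition into \(\int_0^1(\omega_{\tex}-t_{\tex}')\psi\,d\tau=0\) for all bounded measurable \(\psi\).
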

We defer the proof of Lemma \ref{lem:schedule-time-reweighting} to Appendix \ref{ss:proof:lem:schedule-time-reweighting}.

Finally, we add fixed group weights \(w_{\tex}\) and \(w_{\sem}\) to balance the
two representation groups. The corrected weights used by our method are
\begin{equation}
  \label{eq:corrected-weights}
  \omega_{\tex}^{\mathrm{corr}}(\tau,\rho)
  =
  w_{\tex}\,
  \operatorname{sg}
  \left(
    t_{\tex}'(\tau;\rho)
  \right),
  \qquad
  \omega_{\sem}^{\mathrm{corr}}(\tau,\rho)
  =
  w_{\sem},
\end{equation}
where \(\operatorname{sg}(\cdot)\) denotes stop-gradient. The forward value of
\(t_{\tex}'(\tau;\rho)\) performs the change-of-variables correction. We stop
gradients through this factor because it is an importance-weighting correction,
not a schedule objective.

We plug \eqref{eq:corrected-weights} into \eqref{eq:local-time-invariance-condition} and define (with slight abuse of notation) the corrected flow objective as
\begin{equation}
  \label{eq:flow-objective}
  \mathcal{L}_{\mathrm{flow}}(\theta,\rho)
  :=
  \mathcal{L}_{\mathrm{flow}}
  \big(
    \theta,
    \rho;
    \omega^{\mathrm{corr}}
  \big).
\end{equation}
Equivalently,
\begin{equation}
  \label{eq:corrected-flow-objective-expanded}
  \mathcal{L}_{\mathrm{flow}}(\theta,\rho)
  =
  \mathbb{E}_{\tau\sim\mathcal{U}(0,1)}
  \left[
    w_{\tex}\,
    \operatorname{sg}
    \left(
      t_{\tex}'(\tau;\rho)
    \right)
    \ell_{\tex}
    \big(
      \theta;
      t_{\tex}(\tau;\rho),
      \tau
    \big)
    +
    w_{\sem}\,
    \ell_{\sem}
    \big(
      \theta;
      t_{\tex}(\tau;\rho),
      \tau
    \big)
  \right].
\end{equation}
Ignoring the stop-gradient annotation, Eq.~\eqref{eq:corrected-flow-objective-expanded}
has the local-time form
\begin{equation}
  \label{eq:corrected-local-time-objective}
  \mathcal{L}_{\mathrm{flow}}(\theta,\rho)
  =
  w_{\tex}
  \mathbb{E}_{s\sim\mathcal{U}(0,1)}
  \left[
    \ell_{\tex}
    \big(
      \theta;
      s,
      t_{\tex}^{-1}(s;\rho)
    \big)
  \right]
  +
  w_{\sem}
  \mathbb{E}_{s\sim\mathcal{U}(0,1)}
  \left[
    \ell_{\sem}
    \big(
      \theta;
      t_{\tex}(s;\rho),
      s
    \big)
  \right].
\end{equation}
Thus schedule learning changes the cross-representation ordering, while the
marginal local-time weighting remains fixed.

\paragraph{Kinetic regularization.}
The corrected flow objective controls the training-time velocity regression
problem, but it does not by itself control the geometry of the sampling
trajectory. A schedule can fit local velocities well while still being poor for
finite-step ODE sampling; for example, it may compress most texture denoising
into a short interval of global time.

Motivated by the global-time flow in Eq.~\eqref{eq:global-target-field},
we penalize the squared speed of the learned global-time trajectory:
\begin{equation}
  \label{eq:reg}
  \begin{aligned}
  \mathcal{R}_{\mathrm{kin}}(\theta,\rho)
  =
  \mathbb{E}_{\tau\sim\mathcal{U}(0,1),x_0,x_1,c}
  \bigg[
  {}&
    t_{\tex}'(\tau;\rho)^2
    \left\|
      \hat v_\theta^{\tex}
      \big(
        z_\rho(\tau),
        t_{\tex}(\tau;\rho),
        \tau,
        c
      \big)
    \right\|_{\tex}^2
  \\
  &+
    \left\|
      \hat v_\theta^{\sem}
      \big(
        z_\rho(\tau),
        t_{\tex}(\tau;\rho),
        \tau,
        c
      \big)
    \right\|_{\sem}^2
  \bigg].
  \end{aligned}
\end{equation}
The factor \(t_{\tex}'(\tau;\rho)^2\) appears because texture velocities are
predicted per unit local texture time but are used per unit global sampling
time. Penalizing this quantity discourages schedules that move texture too
quickly over a small number of sampling steps.

\paragraph{Schedule and denoiser objectives.}
The two objectives used in the remainder of the method are
\begin{align}
  \label{eq:rho-objective}
  \mathcal{L}_{\rho}(\theta,\rho)
  &=
  \mathcal{L}_{\mathrm{flow}}(\theta,\rho)
  +
  \lambda
  \mathcal{R}_{\mathrm{kin}}(\theta,\rho),
  \\
  \label{eq:theta-objective}
  \mathcal{L}_{\theta}(\theta,\rho)
  &=
  \mathcal{L}_{\mathrm{flow}}(\theta,\rho)
  +
  \mathcal{L}_{\mathrm{aux}}^{\mathrm{REPA}}(\theta,\rho).
\end{align}
The schedule objective \(\mathcal{L}_{\rho}\) is used during the probe stage to
select a schedule that preserves local-time weighting and is stable to
discretize. The denoiser objective \(\mathcal{L}_{\theta}\) is used after the
schedule is fixed; it keeps the remaining SFD~\citep{sfd} training recipe
unchanged, including the LightningDiT auxiliary losses and the REPA alignment
loss~\citep{repa}. Section~\ref{sec:method:probe} describes the two-stage
optimization procedure.

\subsection{Semantic-Leading Schedule Parameterization}
\label{sec:method:parameterization}

After setting semantic time as the global time,
\(t_{\sem}(\tau)=\tau\), the only schedule to learn is the texture schedule
\(t_{\tex}(\tau;\rho)\). We use a compact parameterization that enforces the
structural constraints required by the preceding sections. The texture schedule
should satisfy the endpoint constraints
\(t_{\tex}(0;\rho)=0\) and \(t_{\tex}(1;\rho)=1\), be monotone so that local
texture time never runs backward, be semantic-leading so that
\(t_{\tex}(\tau;\rho)\le \tau\), and provide an easily computed derivative
\(t_{\tex}'(\tau;\rho)\). The derivative is needed for the inference ODE, the
Jacobian-corrected flow loss, and the kinetic regularizer.

\paragraph{Convex monotone schedule family.}
We parameterize the derivative of the texture schedule as a normalized
non-negative polynomial:
\begin{equation}
  \label{eq:fprime}
  t_{\tex}'(\tau;\rho)
  =
  \frac{1}{Z_\rho}
  \sum_{m=0}^{M} a_m \tau^m,
  \qquad
  a_m = \mathrm{softplus}(\rho_m) \ge 0,
  \qquad
  Z_\rho
  =
  \sum_{m=0}^{M}\frac{a_m}{m+1}.
\end{equation}
The texture schedule is obtained by closed-form integration:
\begin{equation}
  \label{eq:fintegral}
  t_{\tex}(\tau;\rho)
  =
  \int_0^\tau t_{\tex}'(s;\rho)\,ds
  =
  \frac{1}{Z_\rho}
  \sum_{m=0}^{M}
  \frac{a_m}{m+1}\tau^{m+1}.
\end{equation}
The normalization \(Z_\rho\) enforces \(t_{\tex}(1;\rho)=1\), while the
integral form gives \(t_{\tex}(0;\rho)=0\). Since the coefficients are
non-negative, \(t_{\tex}'(\tau;\rho)>0\) on \([0,1]\), so
\(t_{\tex}(\cdot;\rho)\) is strictly increasing and has a well-defined inverse.
This inverse is used in the local-time change-of-variables analysis in
Section~\ref{sec:method:objective}; the implementation itself samples
\(\tau\) and evaluates \(t_{\tex}(\tau;\rho)\) and
\(t_{\tex}'(\tau;\rho)\) directly.

\paragraph{Semantic-leading property.}
The same parameterization enforces semantic-leading schedules by convexity.
Differentiating Eq.~\eqref{eq:fprime} gives
\[
  t_{\tex}''(\tau;\rho)
  =
  \frac{1}{Z_\rho}
  \sum_{m=1}^{M} m a_m \tau^{m-1}
  \ge 0,
  \qquad
  \tau\in[0,1],
\]
so \(t_{\tex}(\cdot;\rho)\) is convex. Since
\(t_{\tex}(0;\rho)=0\) and \(t_{\tex}(1;\rho)=1\), convexity implies that the
schedule lies below the chord connecting its endpoints:
\begin{equation}
  \label{eq:semantic-leading}
  t_{\tex}(\tau;\rho)\le \tau,
  \qquad
  \forall \tau\in[0,1].
\end{equation}
Because larger local time means less noise, this inequality ensures that the
semantic component is always at least as clean as the texture component along
the trajectory. Thus the semantic-leading constraint is enforced by
construction, without an auxiliary penalty.

We use \(M=4\) in all experiments. The schedule parameters are initialized so
that \(t_{\tex}(\tau;\rho)\) is close to the identity schedule
\(t_{\tex}(\tau;\rho)\approx\tau\), and the schedule probe in
Section~\ref{sec:method:probe} then learns the semantic-leading deviation from
this initialization.

\subsection{Bilevel Schedule Probe}
\label{sec:method:probe}

The schedule objective in Eq.~\eqref{eq:rho-objective} defines what makes a
schedule useful, but it cannot be optimized over \(\rho\) \emph{in isolation}. The
quality of a schedule depends on the denoiser obtained after adapting to that
schedule, so schedule learning is naturally a bilevel problem:
\begin{equation}
  \label{eq:bilevel-schedule-learning}
  \rho^\star
  \in
  \arg\min_\rho
  \mathcal{L}_{\rho}
  \big(
    \theta^\star(\rho),
    \rho
  \big),
  \qquad
  \theta^\star(\rho)
  \in
  \arg\min_\theta
  \mathcal{L}_{\theta}(\theta,\rho).
\end{equation}
The inner problem adapts the denoiser to a fixed schedule. The outer problem
then chooses the schedule using the adapted denoiser.

This bilevel view has an important practical consequence: schedule gradients are
only meaningful when the denoiser tracks the schedule. If \(\theta\) is frozen,
changing \(\rho\) changes the asynchronous state \(z_\rho(\tau)\) and the time
inputs \((t_{\tex}(\tau;\rho),\tau)\) on which the denoiser is evaluated. The
resulting flow-matching error reflects a model--schedule mismatch, rather than
the intrinsic quality of the new schedule. In practice, this mismatch makes the
loss rise sharply under schedule perturbations, so \(\rho\) barely moves. We
therefore learn \(\rho\) using a temporary denoiser that is optimized jointly
with the schedule.

We do not solve Eq.~\eqref{eq:bilevel-schedule-learning} exactly. Full denoiser
training is expensive, but the schedule parameters are low-dimensional and
converge quickly. A coarse inner optimization is sufficient to identify a good
schedule. We therefore use a two-stage recipe. In Stage I, we run a short joint
probe over a temporary denoiser \(\theta_{\mathrm{probe}}\) and the schedule
parameters \(\rho\), using only about \(1\%\) of the main training budget. In
Stage II, we freeze the learned schedule, discard
\(\theta_{\mathrm{probe}}\), and train the final denoiser \(\theta\) from
scratch. No schedule parameters are updated during the main training run.

During the probe, we optimize
\begin{equation}
  \label{eq:probe-objective}
  \mathcal{L}_{\mathrm{probe}}
  \big(
    \theta_{\mathrm{probe}},
    \rho
  \big)
  =
  \mathcal{L}_{\rho}
  \big(
    \theta_{\mathrm{probe}},
    \rho
  \big).
\end{equation}
Equivalently, using Eq.~\eqref{eq:rho-objective}, this is
\[
  \mathcal{L}_{\mathrm{flow}}
  \big(
    \theta_{\mathrm{probe}},
    \rho
  \big)
  +
  \lambda
  \mathcal{R}_{\mathrm{kin}}
  \big(
    \theta_{\mathrm{probe}},
    \rho
  \big).
\]
The probe denoiser is not intended to be a final generative model; it only
tracks the changing flow-matching problem well enough to supply useful gradients
for the schedule. We choose \(\lambda\) from a one-dimensional sweep as the
weakest value that prevents collapse toward the extremal schedule. This
transition is visualized in Section~\ref{sec:exp:schedule}.

After the probe, we average the schedule parameters over the stable
post-burn-in window and freeze the resulting schedule
\(t_{\tex}^{\star}(\tau)\). The final denoiser is then trained from scratch with
this fixed schedule using \(\mathcal{L}_{\theta}\), including the unchanged SFD
auxiliary losses.

Algorithm~\ref{alg:probe} summarizes the procedure.

\begin{algorithm}[t]
\caption{Two-stage schedule probe and fixed-schedule training}
\label{alg:probe}
\begin{algorithmic}[1]
\Require Main training budget \(S_{\mathrm{train}}\), probe fraction
\(\eta_{\mathrm{probe}}\approx 0.01\), burn-in steps \(S_{\mathrm{burn}}\)

\Statex \textbf{Stage I: joint schedule probe}
\State \textbf{Train:} temporary denoiser \(\theta_{\mathrm{probe}}\), schedule
parameters \(\rho\)
\State \textbf{Objective:}
\[
  \mathcal{L}_{\mathrm{probe}}
  \big(
    \theta_{\mathrm{probe}},
    \rho
  \big)
  =
  \mathcal{L}_{\rho}
  \big(
    \theta_{\mathrm{probe}},
    \rho
  \big)
\]
\State Set
\[
  S_{\mathrm{probe}}
  \leftarrow
  \left\lceil
    \eta_{\mathrm{probe}} S_{\mathrm{train}}
  \right\rceil
\]
\State Initialize \(\theta_{\mathrm{probe}}\)
\State Initialize \(\rho\) with \(t_{\tex}(\tau;\rho)\approx \tau\)
\State Initialize schedule buffer \(\mathcal{B}\leftarrow\emptyset\)

\For{\(s=1,\ldots,S_{\mathrm{probe}}\)}
  \State Sample \((x_1,c)\), Gaussian noise \(x_0\), and
  \(\tau\sim\mathcal{U}(0,1)\)
  \State Update both \(\theta_{\mathrm{probe}}\) and \(\rho\) using
  \[
    \nabla_{\theta_{\mathrm{probe}},\rho}
    \mathcal{L}_{\mathrm{probe}}
    \big(
      \theta_{\mathrm{probe}},
      \rho
    \big)
  \]
  \If{\(s>S_{\mathrm{burn}}\)}
    \State Append \(\rho\) to \(\mathcal{B}\)
  \EndIf
\EndFor

\State Average stable schedule parameters:
\[
  \bar\rho
  \leftarrow
  |\mathcal{B}|^{-1}
  \sum_{\rho_s\in\mathcal{B}}
  \rho_s
\]
\State Define and freeze the learned texture schedule
\[
  t_{\tex}^{\star}(\tau)
  \leftarrow
  t_{\tex}(\tau;\bar\rho)
\]
\State Discard \(\theta_{\mathrm{probe}}\)

\Statex \textbf{Stage II: fixed-schedule final training}
\State \textbf{Train:} final denoiser \(\theta\)
\State \textbf{Freeze:} schedule parameters \(\bar\rho\)
\State \textbf{Objective:}
\[
  \mathcal{L}_{\theta}(\theta,\bar\rho)
\]
\State Initialize final denoiser \(\theta\) from scratch

\For{\(s=1,\ldots,S_{\mathrm{train}}\)}
  \State Sample \((x_1,c)\), Gaussian noise \(x_0\), and
  \(\tau\sim\mathcal{U}(0,1)\)
  \State Update only \(\theta\) using
  \[
    \nabla_\theta
    \mathcal{L}_{\theta}(\theta,\bar\rho)
  \]
\EndFor
\end{algorithmic}
\end{algorithm}

\section{Experiments}
\label{sec:exp}

\subsection{Experimental Setup}
\label{sec:exp:setup}

\paragraph{Task and protocol.}
We evaluate on class-conditional ImageNet-\(256\times256\) generation. Our goal
is to isolate the effect of schedule discovery. Therefore, all controlled
comparisons use the SFD framework~\citep{sfd}. We keep the architecture, latent
representation, auxiliary losses, weak-model architecture for guidance, sampler, and
evaluation protocol unchanged; our changes are confined to the learned fixed
schedule \(f^\star\) and the schedule-learning procedure used to obtain it.

\paragraph{Baselines.}
Our most direct baseline is SFD~\citep{sfd}, which shares with us the
asynchronous semantic--texture latent representation, the LightningDiT-XL/1
backbone, the weak-model architecture and training budget for AutoGuidance, and the sampler; we report the
released SFD-XL and SFD-XXL checkpoints. For broader context in
Table~\ref{tab:fid-ag}, we also include representative class-conditional
ImageNet $256\times256$ generators in two groups. \emph{Latent diffusion
transformers}: DiT~\citep{dit}, SiT~\citep{sit}, MaskDiT~\citep{maskdit},
FasterDiT~\citep{fasterdit}, MDT~\citep{mdt}, MDTv2~\citep{mdtv2}, and
DDT~\citep{ddt}. \emph{Methods leveraging visual representations}:
VA-VAE~\citep{lightningdit}, REPA~\citep{repa}, REPA-E~\citep{repae},
ReDi~\citep{redi}, REG~\citep{reg}, and RAE~\citep{rae}. These broader
comparisons are not fully controlled, but contextualize the system-level
performance of our learned schedule.

\paragraph{Metrics.}
Following standard ImageNet generation practice, we generate 50K
class-balanced samples and report FID~\citep{fid}, sFID~\citep{sfid}, Inception
Score (IS)~\citep{inceptionscore}, Precision, and Recall~\citep{precisionrecall}.
FID measures distributional visual quality, sFID emphasizes spatial statistics,
IS measures class-conditional sample quality and diversity, while Precision and
Recall separate fidelity from coverage. Unless otherwise stated, samples are
evaluated against the ADM reference statistics~\citep{adm}.

\paragraph{Implementation.}
We train on the SFD-released ImageNet latent dataset with a
LightningDiT-XL/1 backbone (675M parameters) and batch size 256. The
schedule probe (Section~\ref{sec:method:probe}) is run for
\(S_{\mathrm{probe}}=10\mathrm{K}\) steps with \(S_{\mathrm{burn}}=5\mathrm{K}\)
burn-in, approximately \(1\%\) of our \(1\mathrm{M}\)-iteration main
training runs; schedule discovery therefore adds negligible overhead on top of
main training. We report results across training budgets up to 3M iterations. We index
unguided convergence (Table~\ref{tab:fid-noguide}) by iteration count, following
the LightningDiT and SFD baselines, and the system-level comparison
(Table~\ref{tab:fid-ag}) by epoch, following the methods we compare against; at
batch size 256 on ImageNet-1k one epoch is roughly 5K iterations, so 80, 200,
and 600 epochs correspond to about 0.4M, 1M, and 3M iterations. Full training
and hardware details are given in Appendix~\ref{app:exp_details}.
Sampling uses dopri5 ODE integration with 250 NFEs,
and AutoGuidance~\citep{ag} uses a weak model with the same configuration as
SFD-XL's (LightningDiT-B trained for 70K steps), but trained under our learned
schedule, so that the weak and main models share the same asynchronous
denoising path. Full hyperparameter
settings are listed in Tables~\ref{tab:hparams-ours} and~\ref{tab:hparams-sfd}
of Appendix~\ref{app:exp_details}.

\subsection{Schedule Probe Diagnostics}
\label{sec:exp:schedule}

We sweep the kinetic-energy regularization strength
\(\lambda \in \{1, 2, 3, 4, 5, 6, 10\}\times10^{-2}\) during the probe and
observe two regimes (Figure~\ref{fig:lambda-sweep}). For
\(\lambda \le 3\times10^{-2}\) the regularizer is too weak to prevent
collapse, and the schedule converges to the same extremal semantic-leading
curve regardless of the exact value, so we plot a single representative
curve for this collapsed regime. For \(\lambda \ge 4\times10^{-2}\) the
schedule stabilizes into a smooth curve that becomes progressively closer
to the identity as \(\lambda\) grows. We use the weakest stable value,
\(\lambda = 4\times10^{-2}\), for all main experiments.

\begin{figure}[t]
  \centering
  \begin{minipage}[t]{0.48\linewidth}
    \centering
    \IfFileExists{figures/lambda_sweep.png}{%
      \includegraphics[width=\linewidth]{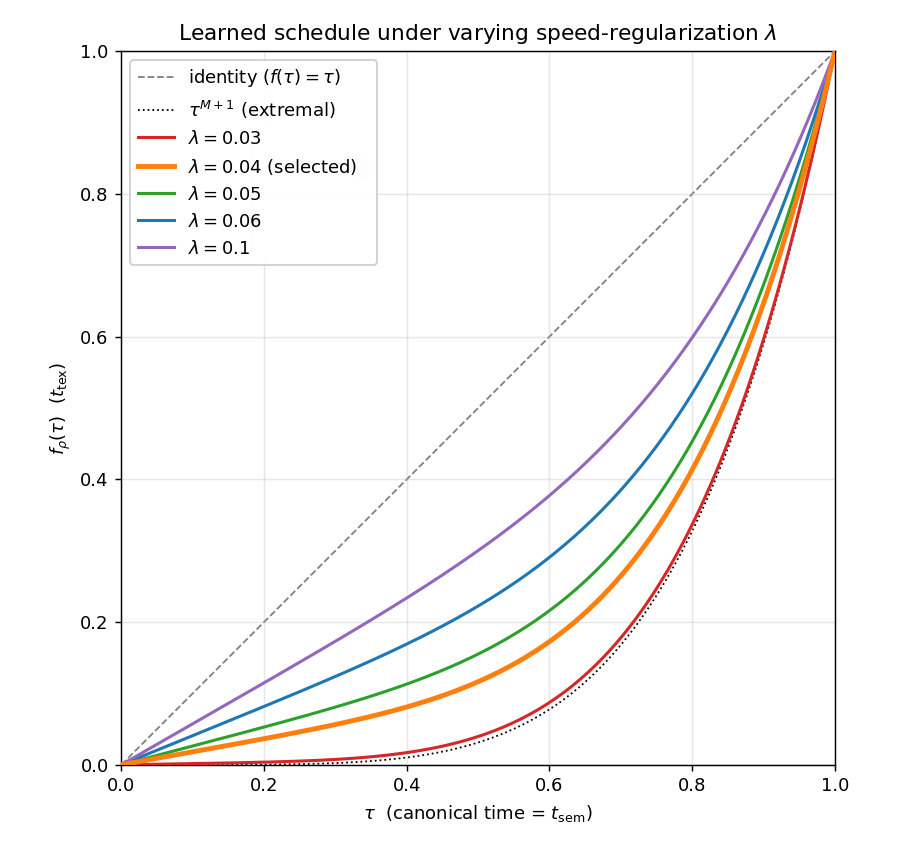}%
    }{%
      \fbox{\begin{minipage}[c][0.18\textheight][c]{0.95\linewidth}
      \centering \TODO{Add \(\lambda\)-sweep schedule plot.}
      \end{minipage}}%
    }
    \caption{\(\lambda\)-sweep of the schedule probe.}
    \label{fig:lambda-sweep}
  \end{minipage}
  \hfill
  \begin{minipage}[t]{0.48\linewidth}
    \centering
    \IfFileExists{figures/schedule_repr_comparison.png}{%
      \includegraphics[width=\linewidth]{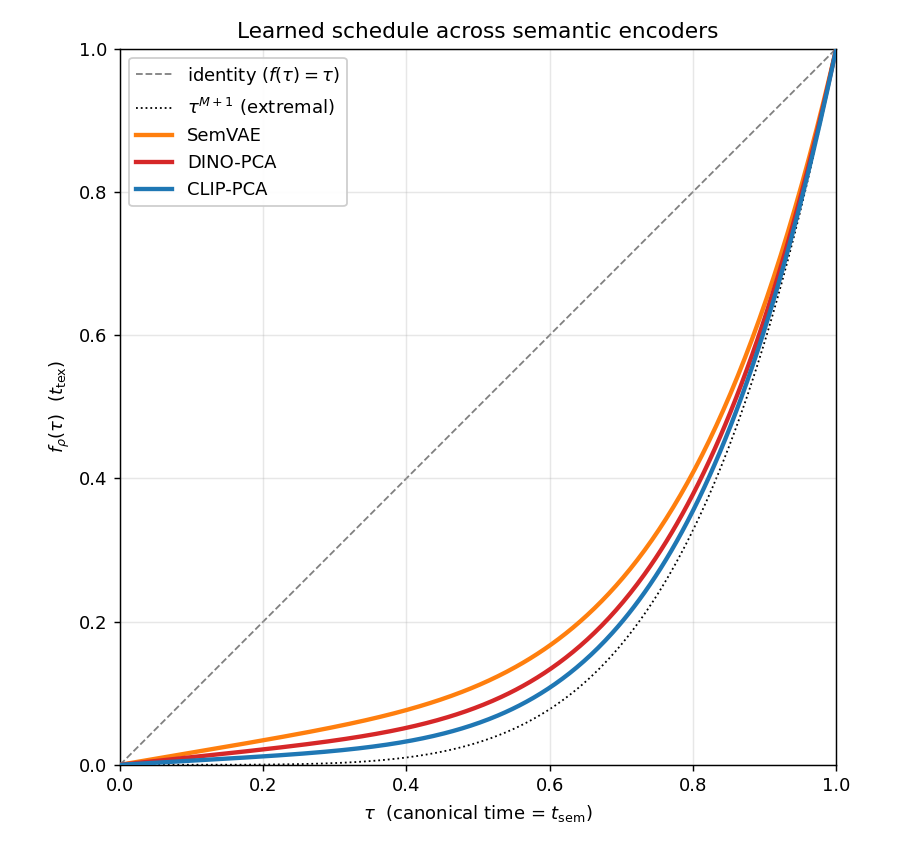}%
    }{%
      \fbox{\begin{minipage}[c][0.18\textheight][c]{0.95\linewidth}
      \centering \TODO{Add representation-comparison schedule plot.}
      \end{minipage}}%
    }
    \caption{Learned schedules across semantic representations.}
    \label{fig:schedule-repr}
  \end{minipage}
\end{figure}

\subsection{Main Results}
\label{sec:exp:main}

We evaluate the learned schedule in two regimes. First, we compare unguided FID
across training budgets to measure convergence speed. Second, we evaluate the
final system with AutoGuidance and compare against state-of-the-art
class-conditional ImageNet generators. In both regimes, the most direct
comparison is SFD-XL, since Ours and SFD-XL use the same backbone, latent
representation, weak-model architecture, sampler, and evaluation protocol. They differ in
the fixed schedule used for asynchronous denoising and in the associated
corrected flow loss from Eq.~\eqref{eq:flow-objective}.

\begin{table}[t]
\vspace{-0.6em}
\centering

\begin{minipage}[t]{0.38\textwidth}
\vspace{0pt}
\centering

\begin{minipage}[t][5.5em][t]{\linewidth}
\centering
\caption{
\textbf{Unguided FID convergence.}
All entries use 675M-parameter XL backbones. Baseline numbers are from
\citet{dit,lightningdit,repa,sfd}.
}
\label{tab:fid-noguide}
\end{minipage}
\vspace{0.4em}

\scriptsize
\setlength{\tabcolsep}{2.0pt}
\renewcommand{\arraystretch}{1.00}
\begin{tabular*}{\linewidth}{@{\extracolsep{\fill}}lcc@{}}
\toprule
Model & Iter. & FID$\downarrow$ \\
\midrule
DiT-XL/2        & 400K & 19.47 \\
DiT-XL/2        & 7M   & 9.62 \\
LightningDiT    & 400K & 9.29 \\
LightningDiT    & 1M   & 7.48 \\
LightningDiT    & 2M   & 6.88 \\
LightningDiT    & 4M   & 6.50 \\
\quad + REPA    & 400K & 6.94 \\
\quad + REPA    & 1M   & 6.17 \\
\quad + REPA    & 2M   & 5.87 \\
\quad + REPA    & 4M   & 5.84 \\
\midrule
\quad + SFD     & 70K  & 8.79 \\
\quad + SFD     & 120K & 6.22 \\
\quad + SFD     & 400K & 3.53 \\
\quad + SFD     & 1M   & 2.82 \\
\quad + SFD     & 2M   & 2.74 \\
\quad + SFD     & 4M   & 2.54 \\
\midrule
\quad + Ours & 70K  & \textbf{6.89} \\
\quad + Ours & 120K & \textbf{4.93} \\
\quad + Ours & 400K & \textbf{2.87} \\
\quad + Ours & 800K & \textbf{2.53} \\
\quad + Ours & 1M   & \textbf{2.37} \\
\quad + Ours & 2M   & \textbf{2.21} \\
\quad + Ours & 3M   & \textbf{2.14} \\
\bottomrule
\end{tabular*}
\end{minipage}
\hfill
\begin{minipage}[t]{0.59\textwidth}
\vspace{0pt}
\centering

\begin{minipage}[t][5.5em][t]{\linewidth}
\centering
\caption{
\textbf{System-level comparison with AutoGuidance.}
Class-conditional ImageNet 256$\times$256 results under AutoGuidance;
baseline numbers are from the corresponding papers.
}
\label{tab:fid-ag}
\end{minipage}
\vspace{0.15em}

\scriptsize
\setlength{\tabcolsep}{1.6pt}
\renewcommand{\arraystretch}{1.00}
\begin{tabular*}{\linewidth}{@{\extracolsep{\fill}}lccccccc@{}}
\toprule
Method & Ep. & Par. & FID$\downarrow$ & sFID$\downarrow$ & IS$\uparrow$ & Prec.$\uparrow$ & Rec.$\uparrow$ \\
\midrule
\multicolumn{8}{@{}l}{\textit{Latent Diffusion Models}} \\
DiT-XL        & 1400 & 675M & 2.27 & 4.60 & 278.2 & \textbf{0.83} & 0.57 \\
MaskDiT       & 1600 & 675M & 2.28 & 5.67 & 276.6 & 0.80 & 0.61 \\
SiT-XL        & 1400 & 675M & 2.06 & 4.50 & 270.3 & 0.82 & 0.59 \\
FasterDiT     & 400  & 675M & 2.03 & 4.63 & 264.0 & 0.81 & 0.60 \\
MDT           & 1300 & 675M & 1.79 & 4.57 & 283.0 & 0.81 & 0.61 \\
MDTv2         & 1080 & 675M & 1.58 & 4.52 & \textbf{314.7} & 0.79 & 0.65 \\
DDT           & 400  & 675M & 1.26 & --   & 310.6 & 0.79 & 0.65 \\
\midrule
\multicolumn{8}{@{}l}{\textit{Leveraging Visual Representations}} \\
VA-VAE        & 800 & 675M & 1.35 & 4.15 & 295.3 & 0.79 & 0.65 \\
REPA          & 800 & 675M & 1.42 & 4.70 & 305.7 & 0.80 & 0.65 \\
REPA-E        & 800 & 675M & 1.12 & 4.09 & 302.9 & 0.79 & 0.66 \\
ReDi          & 800 & 675M & 1.61 & 4.66 & 295.1 & 0.78 & 0.64 \\
REG           & 800 & 677M & 1.36 & 4.25 & 299.4 & 0.77 & 0.66 \\
RAE-DiT       & 800 & 676M & 1.41 & --   & 309.4 & 0.80 & 0.63 \\
RAE-DiTDH     & 800 & 839M & 1.13 & --   & 262.6 & 0.78 & 0.67 \\
SFD-XL        & 80  & 675M & 1.30 & 3.87 & 233.4 & 0.78 & 0.64 \\
SFD-XL        & 800 & 675M & 1.06 & 3.89 & 267.0 & 0.78 & 0.67 \\
SFD-XXL       & 80  & 1.0B & 1.19 & 4.00 & 240.4 & 0.78 & 0.65 \\
SFD-XXL       & 800 & 1.0B & 1.04 & \textbf{3.75} & 264.2 & 0.78 & 0.66 \\
Ours-XL & 80  & 675M & 1.14 & 3.79 & 248.4 & 0.78 & 0.71 \\
Ours-XL & 200 & 675M & 1.05 & 3.79 & 273.0 & 0.78 & \textbf{0.72} \\
Ours-XL & 600 & 675M & \textbf{1.02} & 3.78 & 270.8 & 0.78 & 0.66 \\
\bottomrule
\end{tabular*}
\end{minipage}

\vspace{-0.7em}
\end{table}

\begin{figure*}[t]
  \centering
  \IfFileExists{figures/qualitative_grid.png}{%
    \includegraphics[width=\textwidth]{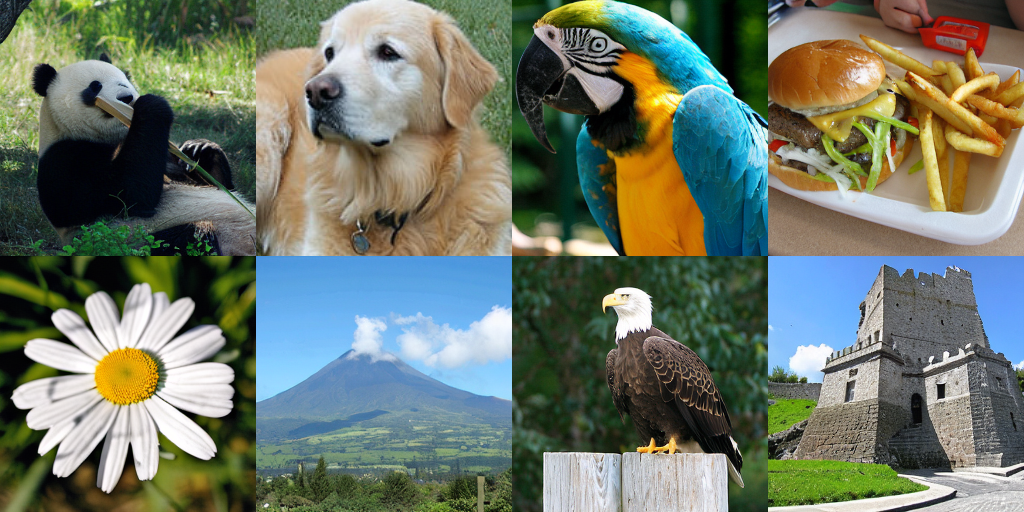}%
  }{%
    \fbox{\begin{minipage}[c][0.18\textheight][c]{0.96\textwidth}
    \centering \TODO{Add a 2\(\times\)10 selected-sample grid from Ours-XL at \(256\times256\), generated with the same AutoGuidance setting as Table~\ref{tab:fid-ag}.}
    \end{minipage}}%
  }
  \caption{
    \textbf{Qualitative samples from Ours-XL trained at \(256\times256\) resolution.}
    We show selected class-conditional samples generated by the final Ours-XL
    model using the same AutoGuidance setting as in Table~\ref{tab:fid-ag}.
  }
  \label{fig:qualitative}
\end{figure*}

\paragraph{Faster unguided convergence.}
Table~\ref{tab:fid-noguide} shows that the learned schedule improves
sample-efficiency over the fixed SFD schedule at every matched training budget.
The effect is largest in the low- and mid-compute regimes: at 400K iterations,
it improves FID from 3.53 to 2.87, and it reaches FID 2.53 at
800K iterations, matching the 4M-iteration SFD-XL checkpoint with roughly
\(5\times\) fewer diffusion-model updates.

\paragraph{Better final unguided quality.}
The gain is not only an early-training effect. At 1M iterations it
reaches FID 2.37, improving over the best reported SFD-XL unguided result of
2.54 at 4M iterations; extending training to 3M lowers the unguided FID
further to 2.14. Thus the learned schedule both accelerates convergence
and improves the final unguided model under the same architecture and latent
representation.

\paragraph{Both gains transfer under AutoGuidance.}
Table~\ref{tab:fid-ag} shows the same pattern under
AutoGuidance~\citep{ag}. With the same weak-model architecture and guidance
scale as SFD-XL, our model cuts SFD-XL's FID from 1.30 to
\textbf{1.14} at 80 epochs and reaches \textbf{FID 1.047} (1.05 in
Table~\ref{tab:fid-ag}) at 200 epochs, on par with the 1.0B-parameter
SFD-XXL (1.04) despite a smaller backbone and one quarter of the training
budget. Training longer to 600 epochs reaches
\textbf{FID 1.02}, the lowest among all 675M-parameter entries
in Table~\ref{tab:fid-ag} and below the 1.0B-parameter SFD-XXL. On Recall, our
200-epoch model reaches \textbf{0.72}, the highest among all entries in
Table~\ref{tab:fid-ag}, indicating that the gains include broader
coverage of the data distribution rather than sharper modes alone.
Figure~\ref{fig:qualitative} shows class-conditional samples from the
same guided sampling setup.

\subsection{Robustness to Semantic Representations}
\label{sec:exp:repr-robustness}

\paragraph{Alternative semantic representations.}
The main experiments use the SFD SemVAE semantic latent, itself trained
as a VAE on top of DINOv2-B~\citep{dinov2} features. To test whether the
schedule-learning procedure depends on this encoder, we evaluate two
simpler alternatives that skip the SemVAE training step:
\emph{DINO-PCA}, a direct PCA projection of the same DINOv2-B feature
dataset that SemVAE was trained on, and \emph{CLIP-PCA}, a PCA
projection of CLIP~\citep{clip} features on ImageNet, both to the same
channel count as SemVAE. The only changes from the main setup are the
semantic latent and the per-encoder schedule. For a fair cross-encoder
comparison, the schedules in this section are obtained at a common
kinetic-regularizer strength \(\lambda=2\times10^{-2}\), even though
per-encoder stability thresholds differ from
Section~\ref{sec:exp:schedule}.

\paragraph{Schedule shape varies with the semantic encoder.}
The frozen schedules differ across encoders
(Figure~\ref{fig:schedule-repr}): CLIP-PCA produces the most aggressive
semantic-leading curve, DINO-PCA sits in the middle, and SemVAE stays
closest to the identity. Roughly, the richer the semantic latent's
information about the image, the less the texture branch is delayed.

\begin{table}[t]
\centering
\caption{
\textbf{Robustness to semantic representations.}
Unguided FID at 400K iterations; the SemVAE entry matches the 400K row of
Table~\ref{tab:fid-noguide}. SFD numbers are from~\citet{sfd}.
$^\dagger$SFD trains a CLIP-VAE on top of CLIP features; ours uses
CLIP-PCA without that intermediate VAE.
}
\label{tab:repr-robustness}
\small
\setlength{\tabcolsep}{6pt}
\renewcommand{\arraystretch}{1.05}
\begin{tabular}{llcc}
\toprule
Feature source & Semantic latent & SFD~\citep{sfd} & Ours \\
\midrule
DINOv2-B~\citep{dinov2} & SemVAE                  & 3.53 & \textbf{2.87} \\
DINOv2-B~\citep{dinov2} & PCA                     & 4.06 & \textbf{2.97} \\
CLIP~\citep{clip}       & VAE / PCA$^{\dagger}$   & 4.89 & \textbf{4.54} \\
\bottomrule
\end{tabular}
\end{table}

\paragraph{Results.}
At 400K iterations (Table~\ref{tab:repr-robustness}), the learned
schedule reaches FID 2.97 with DINO-PCA and 4.54 with CLIP-PCA. Under
the same two encoders, \citet{sfd} report 4.06 with DINO-PCA and 4.89
with a CLIP-VAE trained on CLIP features. Schedule learning thus
improves over the hand-tuned offset at every encoder, and in the CLIP
case our PCA projection already beats SFD's CLIP-VAE recipe while
skipping the VAE-training step entirely.

\section{Conclusion}
\label{sec:conclusion}

We replace the hand-tuned semantic-leading offset of SFD with a schedule
\(f^\star\) learned as a convex monotone bijection of the global
time, so the leading property holds without an auxiliary constraint. A
short regularized probe discovers \(f^\star\) jointly with a temporary
denoiser, while a kinetic-energy regularizer keeps the resulting curve
away from the extremal collapse that fits the training loss but is too
stiff to integrate in finite steps. The averaged curve is frozen for
full training, leaving the final inference interface identical to a
fixed-schedule asynchronous model. On ImageNet-\(256\times256\), this
gives faster unguided convergence and better final FID than the
hand-tuned schedule under the same backbone, and under AutoGuidance reaches
FID \(1.02\), the lowest among the \(675\)M-parameter systems in our comparison
and below the \(1.0\)B-parameter SFD-XXL. The gain also transfers
across DINO- and CLIP-based semantic encoders. We discuss limitations
and directions for future work in Appendix~\ref{app:limitations}.


\newpage

\bibliographystyle{plainnat}
\bibliography{ref}


\newpage

\appendix

\section{Limitations and Future Work}
\label{app:limitations}

Our experiments cover a single visual modality at a single resolution
and backbone size: class-conditional ImageNet-\(256\times256\) with the
LightningDiT-XL/1 backbone. While the gains transfer across the three
semantic encoders we evaluate
(Section~\ref{sec:exp:repr-robustness}), we do not study text-to-image,
video, or audio diffusion, and we do not scale beyond 675M parameters.
The interaction between \(f^\star\) and other guidance methods is also
limited in scope: we use AutoGuidance with the same weak-model configuration as
SFD-XL, and we have not separately characterized how
the learned schedule interacts with classifier-free guidance or with
guidance weights that themselves vary along \(\tau\).

We parameterize \(t_{\tex}'(\tau;\rho)\) as a degree-\(M=4\) polynomial. This is
the minimal expressive degree that gave a non-collapsed curve under the
kinetic regularizer in preliminary tests, and we have not compared
richer monotone families such as splines or input-convex networks. The
probe also requires a per-encoder sweep of \(\lambda\), since the value
that prevents collapse shifts with the semantic encoder, and
transferring the procedure to a new encoder involves a small
one-dimensional search.

A natural next direction is to extend the framework beyond two
representation groups, learning a joint schedule among a texture
latent, a semantic latent, and additional auxiliary representations
such as depth or segmentation. The convex-monotone parameterization
extends directly to a collection of pairwise schedules, but the
kinetic regularizer and probe budget would need to be revisited at
this larger scale.

\section{Additional Experimental Details}
\label{app:exp_details}

\paragraph{Setup details.}
We use the SFD~\citep{sfd} implementation and its released
ImageNet-\(256\times256\) latent dataset. Texture latents come from the
SD-VAE f16-d32 of LightningDiT~\citep{lightningdit}; semantic latents
come from the SemVAE encoder, which compresses DINOv2-B~\citep{dinov2}
patch features with registers~\citep{registers} into a compact semantic
latent. The diffusion backbone is LightningDiT-XL/1, with
REPA~\citep{repa} alignment between an internal DiT block and DINOv2-B
final-layer features. The semantic group weight
\(w_{\sem}=2\) balances the aggregate semantic and texture
contributions under the global mean-flat reduction: the texture group
has 32 channels while the semantic group has 16, and the multiplier
compensates for this 2:1 channel ratio. FID is computed against the
ADM~\citep{adm} reference statistics. All remaining numerical settings
are listed in Tables~\ref{tab:hparams-ours} (schedule learning) and
\ref{tab:hparams-sfd} (inherited).

\begin{table}[!htbp]
\centering
\caption{Hyperparameters introduced or chosen in this work for schedule
learning. All other settings are inherited from
SFD~\citep{sfd}/LightningDiT~\citep{lightningdit} and listed in
Table~\ref{tab:hparams-sfd}.}
\label{tab:hparams-ours}
\small
\setlength{\tabcolsep}{6pt}
\renewcommand{\arraystretch}{1.10}
\begin{tabular}{ll}
\toprule
\textbf{Setting} & \textbf{Value} \\
\midrule
\multicolumn{2}{l}{\textit{Schedule parameterization}} \\
\midrule
Polynomial degree \(M\)             & 4 \\
\(\rho\) initialization             & identity (\(t_{\tex}(\tau;\rho)=\tau\)) \\
Group weight \(w_{\tex}\)   & 1 \\
Group weight \(w_{\sem}\)   & 2 \\
Stop-grad on \(t_{\tex}'(\tau;\rho)\) factor     & yes \\
\midrule
\multicolumn{2}{l}{\textit{Schedule probe}} \\
\midrule
Probe steps \(S_{\mathrm{probe}}\)  & 10K \\
Burn-in steps \(S_{\mathrm{burn}}\) & 5K \\
Kinetic regularizer \(\lambda\)     & \(4\times10^{-2}\) \\
Probe optimizer                     & AdamW \\
Probe learning rate                 & \(10^{-4}\) (denoiser), \(10^{-2}\) (\(\rho\)) \\
Probe batch size                    & 256 \\
\bottomrule
\end{tabular}
\end{table}

\begin{table}[!htbp]
\centering
\caption{Hyperparameters inherited from SFD~\citep{sfd} and
LightningDiT~\citep{lightningdit}. Schedule-learning hyperparameters
introduced by this work are listed separately in
Table~\ref{tab:hparams-ours}.}
\label{tab:hparams-sfd}
\small
\setlength{\tabcolsep}{6pt}
\renewcommand{\arraystretch}{1.10}
\begin{tabular}{ll}
\toprule
\textbf{Setting} & \textbf{Value} \\
\midrule
\multicolumn{2}{l}{\textit{Architecture}} \\
\midrule
Backbone                            & LightningDiT-XL/1 \\
Parameters                          & 675M \\
DiT blocks                          & 28 \\
Hidden dim                          & 1152 \\
Attention heads                     & 16 \\
MLP ratio                           & 4.0 \\
Patch size                          & 1 \\
Texture latent shape                & \(32\times16\times16\) \\
Semantic latent shape               & \(16\times16\times16\) \\
SemVAE parameters                   & 29M \\
\midrule
\multicolumn{2}{l}{\textit{Main training}} \\
\midrule
Batch size                          & 256 \\
Optimizer                           & AdamW \\
Learning rate                       & \(10^{-4}\) \\
\(\beta_1\)                         & 0.9 \\
\(\beta_2\)                         & 0.999 \\
Weight decay                        & 0 \\
LR warmup                           & none \\
LR schedule                         & constant \\
Gradient clipping                   & none \\
EMA decay                           & 0.9999 \\
Precision                           & bf16 mixed \\
\midrule
\multicolumn{2}{l}{\textit{Auxiliary losses}} \\
\midrule
REPA alignment from / to            & DiT block 2 / DINOv2 final layer \\
REPA loss weight                    & 1.0 \\
Cosine-direction loss weight        & 1.0 \\
\midrule
\multicolumn{2}{l}{\textit{Sampling}} \\
\midrule
ODE solver                          & dopri5 \\
NFE                                 & 250 \\
\(\mathrm{atol}\)                   & \(10^{-6}\) \\
\(\mathrm{rtol}\)                   & \(10^{-3}\) \\
AutoGuidance scale \(w\)            & 1.5 \\
Weak model                          & LightningDiT-B \\
Weak model training steps           & 70K \\
\midrule
\multicolumn{2}{l}{\textit{Evaluation}} \\
\midrule
Sample count                        & 50K, class-balanced \\
Reference batch                     & \texttt{VIRTUAL\_imagenet256\_labeled.npz} \\
Metrics                             & FID, sFID, IS, Precision, Recall \\
\bottomrule
\end{tabular}
\end{table}

\paragraph{Hardware and runtime.}
Training budgets up to 1M iterations were run on NVIDIA B200 and longer budgets
(2M and 3M iterations) on NVIDIA H200, under an identical recipe, sampler, and
evaluation protocol. On B200 the main training run uses 2 GPUs at a throughput
of 4.6 steps per second, giving approximately 60 hours of wall-clock time for
the 1M-iteration budget; the schedule probe runs for
$S_{\mathrm{probe}}=10\mathrm{K}$ steps and takes approximately 36 minutes,
consistent with the under-1\% overhead reported in Section~\ref{sec:exp:setup}.

\paragraph{Guidance scale.}
For AutoGuidance we use guidance scale \(w=1.5\) (matching SFD-XL) for the
80- and 200-epoch models, and \(w=1.3\) for the more-converged 600-epoch model.

\section{Proofs for Main Results}
\label{s:proofs}

\subsection{Proof of Theorem \ref{thm:ideal-async-flow}}
\label{ss:proof:thm:ideal-async-flow}

We prove the three claims in order. Throughout the proof, all distributions are
conditioned on the class label \(c\), and we suppress this conditioning when it
is clear from context.

\paragraph{Interpolating distribution.}
For a fixed global time \(\tau\), define
\[
  A_\tau
  =
  \mathrm{diag}
  \big(
    t_{\tex}(\tau)I_{d_{\tex}},
    t_{\sem}(\tau)I_{d_{\sem}}
  \big),
  \qquad
  B_\tau
  =
  \mathrm{diag}
  \big(
    (1-t_{\tex}(\tau))I_{d_{\tex}},
    (1-t_{\sem}(\tau))I_{d_{\sem}}
  \big).
\]
By the component-wise interpolation in Eq.~\eqref{eq:component-interpolant},
the asynchronous state can be written as
\begin{equation}
  \label{eq:appendix-async-affine-form}
  z(\tau)
  =
  A_\tau x_1 + B_\tau x_0,
\end{equation}
where \(x_1\sim p_1(\cdot\mid c)\) and
\(x_0\sim\mathcal{N}(0,I_{d_{\tex}+d_{\sem}})\). Therefore, conditional on
\(x_1\), the random variable \(z(\tau)\) is Gaussian:
\[
  z(\tau)\mid x_1
  \sim
  \mathcal{N}
  \left(
    A_\tau x_1,
    B_\tau B_\tau^\top
  \right).
\]
Since
\[
  B_\tau B_\tau^\top
  =
  \mathrm{diag}
  \big(
    (1-t_{\tex}(\tau))^2I_{d_{\tex}},
    (1-t_{\sem}(\tau))^2I_{d_{\sem}}
  \big)
  =
  \Sigma_\tau,
\]
marginalizing over \(x_1\sim p_1(\cdot\mid c)\) gives
\[
  p_\tau(\cdot\mid c)
  =
  \big(
    (A_\tau)_{\#}p_1(\cdot\mid c)
  \big)
  *
  \mathcal{N}(0,\Sigma_\tau),
\]
which is Eq.~\eqref{eq:async-density-convolution}. At \(\tau=0\),
\(A_\tau=0\) and \(\Sigma_\tau=I\), so \(p_\tau=p_0\). At \(\tau=1\),
\(A_\tau=I\) and \(\Sigma_\tau=0\), so \(p_\tau=p_1\), with the latter identity
understood as a weak limit.

\paragraph{Continuity equation.}
For each endpoint pair \((x_0,x_1)\), differentiating
Eq.~\eqref{eq:scheduled-async-state-general} with respect to global time gives
\begin{equation}
  \label{eq:appendix-sample-global-velocity}
  \dot z(\tau)
  =
  \left[
    t_{\tex}'(\tau)u^{\tex}(t_{\tex}(\tau)),
    \;
    t_{\sem}'(\tau)u^{\sem}(t_{\sem}(\tau))
  \right].
\end{equation}
Let \(\varphi\in C_c^\infty(\mathbb{R}^{d_{\tex}+d_{\sem}})\) be a smooth test
function. Then
\begin{align}
  \frac{d}{d\tau}
  \mathbb{E}
  \left[
    \varphi(z(\tau))
    \mid c
  \right]
  &=
  \mathbb{E}
  \left[
    \nabla_z\varphi(z(\tau))^\top
    \dot z(\tau)
    \mid c
  \right].
\end{align}
Taking conditional expectation of the sample-wise global-time velocity given
\(z(\tau)=z\) yields
\begin{align}
  \mathbb{E}
  \left[
    \dot z(\tau)
    \mid
    z(\tau)=z,\; c
  \right]
  &=
  \left[
    t_{\tex}'(\tau)
    \mathbb{E}
    \left[
      u^{\tex}(t_{\tex}(\tau))
      \mid
      z(\tau)=z,\; c
    \right],
    \right.
  \\
  &\qquad\qquad
    \left.
    t_{\sem}'(\tau)
    \mathbb{E}
    \left[
      u^{\sem}(t_{\sem}(\tau))
      \mid
      z(\tau)=z,\; c
    \right]
  \right]
  \\
  &=
  V^\star(z,\tau,c),
\end{align}
where the last equality uses Eq.~\eqref{eq:ideal-local-flow}. Therefore,
\[
  \frac{d}{d\tau}
  \int
    \varphi(z)p_\tau(z\mid c)
  \,dz
  =
  \int
    \nabla_z\varphi(z)^\top
    V^\star(z,\tau,c)
    p_\tau(z\mid c)
  \,dz .
\]
Integrating by parts gives
\[
  \frac{d}{d\tau}
  \int
    \varphi(z)p_\tau(z\mid c)
  \,dz
  =
  -
  \int
    \varphi(z)
    \nabla_z\!\cdot
    \left(
      p_\tau(z\mid c)V^\star(z,\tau,c)
    \right)
  \,dz .
\]
Since this holds for all smooth compactly supported test functions
\(\varphi\), \(p_\tau\) satisfies the continuity equation in the weak sense:
\[
  \partial_\tau p_\tau(z\mid c)
  +
  \nabla_z\!\cdot
  \left(
    p_\tau(z\mid c)V^\star(z,\tau,c)
  \right)
  =
  0.
\]
Under the usual regularity assumptions ensuring existence and uniqueness of the
ODE flow, the probability-flow ODE
\[
  \frac{dZ_\tau}{d\tau}
  =
  V^\star(Z_\tau,\tau,c)
\]
has marginals \(p_\tau(\cdot\mid c)\). Since \(p_0\) is Gaussian and \(p_1\) is
the clean data-latent distribution, this ODE transports noise to data.

\paragraph{Score characterization.}
Fix local times \((t_{\tex},t_{\sem})\) and a group
\(g\in\{\tex,\sem\}\). Write \(t_g=t_{\tex}\) if \(g=\tex\) and
\(t_g=t_{\sem}\) if \(g=\sem\). For the linear Gaussian noising channel,
\[
  z^g
  =
  t_g x_1^g + (1-t_g)x_0^g,
  \qquad
  x_0^g\sim\mathcal{N}(0,I).
\]
Let
\[
  s_g(z,t_{\tex},t_{\sem},c)
  =
  \nabla_{z^g}
  \log p_{t_{\tex},t_{\sem}}(z\mid c).
\]
Differentiating the conditional Gaussian density with respect to \(z^g\) and
then averaging over the posterior of \(x_1\) gives the group-wise Tweedie
identity
\begin{equation}
  \label{eq:appendix-tweedie-noise}
  s_g(z,t_{\tex},t_{\sem},c)
  =
  -
  \frac{
    \mathbb{E}
    \left[
      x_0^g
      \mid
      z(t_{\tex},t_{\sem})=z,\; c
    \right]
  }{
    1-t_g
  }.
\end{equation}
Equivalently,
\begin{equation}
  \label{eq:appendix-noise-posterior-mean}
  \mathbb{E}
  \left[
    x_0^g
    \mid
    z(t_{\tex},t_{\sem})=z,\; c
  \right]
  =
  -(1-t_g)s_g(z,t_{\tex},t_{\sem},c).
\end{equation}
Using \(z^g=t_gx_1^g+(1-t_g)x_0^g\), we also have
\begin{align}
  \mathbb{E}
  \left[
    x_1^g
    \mid
    z(t_{\tex},t_{\sem})=z,\; c
  \right]
  &=
  \frac{
    z^g
    -
    (1-t_g)
    \mathbb{E}
    \left[
      x_0^g
      \mid
      z(t_{\tex},t_{\sem})=z,\; c
    \right]
  }{
    t_g
  }
  \\
  &=
  \frac{
    z^g
    +
    (1-t_g)^2
    s_g(z,t_{\tex},t_{\sem},c)
  }{
    t_g
  }.
\end{align}
Therefore the ideal local flow is
\begin{align}
  v_g^\star(z,t_{\tex},t_{\sem},c)
  &=
  \mathbb{E}
  \left[
    x_1^g-x_0^g
    \mid
    z(t_{\tex},t_{\sem})=z,\; c
  \right]
  \\
  &=
  \frac{z^g}{t_g}
  +
  \frac{(1-t_g)^2}{t_g}
  s_g(z,t_{\tex},t_{\sem},c)
  +
  (1-t_g)
  s_g(z,t_{\tex},t_{\sem},c)
  \\
  &=
  \frac{z^g}{t_g}
  +
  \frac{1-t_g}{t_g}
  s_g(z,t_{\tex},t_{\sem},c).
\end{align}
Specializing this identity to \(g=\tex\) gives
Eq.~\eqref{eq:score-transform-tex}; specializing it to \(g=\sem\) gives
Eq.~\eqref{eq:score-transform-sem}. Finally, along the scheduled path,
\(p_\tau=p_{t_{\tex}(\tau),t_{\sem}(\tau)}\). Substituting the two local score
forms into Eq.~\eqref{eq:global-target-field} gives
Eq.~\eqref{eq:score-transform-global}.

\subsection{Proof of Theorem \ref{thm:async-fm-population-optimum}}
\label{ss:proof:thm:async-fm-population-optimum}

We prove the statement for the texture component; the semantic component is
identical. Fix a schedule \(t_{\tex}(\tau;\rho)\). For a fixed global time
\(\tau\), write
\[
  T_{\tex}=t_{\tex}(\tau;\rho),
  \qquad
  T_{\sem}=\tau,
  \qquad
  Z=z(T_{\tex},T_{\sem}).
\]
The texture component of the weighted flow objective at this time is
\[
  \omega_{\tex}(\tau,\rho)
  \mathbb{E}
  \left[
    \left\|
      \hat v_\theta^{\tex}(Z,T_{\tex},T_{\sem},c)
      -
      u^{\tex}(T_{\tex})
    \right\|_{\tex}^2
  \right].
\]
Since \(\omega_{\tex}(\tau,\rho)>0\) and depends only on the sampled local
times, it does not change the pointwise minimizer with respect to the predicted
function. Thus it suffices to minimize
\[
  \mathbb{E}
  \left[
    \left\|
      h(Z,T_{\tex},T_{\sem},c)
      -
      u^{\tex}(T_{\tex})
    \right\|_{\tex}^2
  \right]
\]
over measurable functions \(h\). By the standard \(L_2\)-projection identity,
the minimizer is the conditional mean
\[
  h^\star(z,T_{\tex},T_{\sem},c)
  =
  \mathbb{E}
  \left[
    u^{\tex}(T_{\tex})
    \mid
    Z=z,\; T_{\tex},\; T_{\sem},\; c
  \right].
\]
Because \(T_{\tex}\) and \(T_{\sem}\) are fixed by the sampled global time
\(\tau\), this conditional mean is exactly
\[
  v_{\tex}^\star(z,T_{\tex},T_{\sem},c)
  =
  v_{\tex}^\star
  \big(
    z,
    t_{\tex}(\tau;\rho),
    \tau,
    c
  \big),
\]
as defined in Eq.~\eqref{eq:ideal-local-flow}. Therefore, in the infinite-data
and infinite-capacity limit,
\[
  \hat v_{\theta^\star}^{\tex}
  \big(
    z,
    t_{\tex}(\tau;\rho),
    \tau,
    c
  \big)
  =
  v_{\tex}^\star
  \big(
    z,
    t_{\tex}(\tau;\rho),
    \tau,
    c
  \big)
\]
for \(p_\tau^\rho(z\mid c)\)-almost every \(z\) and almost every \(\tau\).

The same argument applied to the semantic component gives
\[
  \hat v_{\theta^\star}^{\sem}
  \big(
    z,
    t_{\tex}(\tau;\rho),
    \tau,
    c
  \big)
  =
  v_{\sem}^\star
  \big(
    z,
    t_{\tex}(\tau;\rho),
    \tau,
    c
  \big).
\]
Because the model is assumed to have infinite capacity, the two component
predictions can simultaneously realize their respective conditional means.
Positive time weights only determine how local-time pairs are averaged in the
global objective; they do not alter the pointwise conditional-mean target at any
fixed local-time pair.

\subsection{Proof of Lemma \ref{lem:schedule-time-reweighting}}
\label{ss:proof:lem:schedule-time-reweighting}

By definition,
\begin{equation}
  \label{eq:texture-global-time-average-proof}
  \mathbb{E}_{\tau\sim\mathcal{U}(0,1)}
  \left[
    \omega_{\tex}(\tau,\rho)
    \ell_{\tex}
    \big(
      \theta;
      t_{\tex}(\tau;\rho),
      \tau
    \big)
  \right]
  =
  \int_0^1
    \omega_{\tex}(\tau,\rho)
    \ell_{\tex}
    \big(
      \theta;
      t_{\tex}(\tau;\rho),
      \tau
    \big)
  \,d\tau .
\end{equation}
Apply the change of variables \(s=t_{\tex}(\tau;\rho)\). Since
\(t_{\tex}(\cdot;\rho)\) is strictly increasing,
\[
  \tau=t_{\tex}^{-1}(s;\rho),
  \qquad
  d\tau
  =
  \frac{
    ds
  }{
    t_{\tex}'
    \big(
      t_{\tex}^{-1}(s;\rho);
      \rho
    \big)
  } .
\]
Therefore the texture term becomes
\begin{equation}
  \label{eq:texture-local-time-decomposition}
  \int_0^1
  \frac{
    \omega_{\tex}
    \big(
      t_{\tex}^{-1}(s;\rho),
      \rho
    \big)
  }{
    t_{\tex}'
    \big(
      t_{\tex}^{-1}(s;\rho);
      \rho
    \big)
  }
  \ell_{\tex}
  \big(
    \theta;
    s,
    t_{\tex}^{-1}(s;\rho)
  \big)
  \,ds .
\end{equation}
For this to equal the first term of
Eq.~\eqref{eq:local-time-invariance-condition} for arbitrary
\(\ell_{\tex}\), the ratio multiplying \(\ell_{\tex}\) must equal one almost
everywhere. This gives
\(\omega_{\tex}(\tau,\rho)=t_{\tex}'(\tau;\rho)\). The semantic term requires no
change of variables because \(t_{\sem}(\tau)=\tau\):
\begin{equation}
  \label{eq:semantic-local-time-decomposition}
  \mathbb{E}_{\tau\sim\mathcal{U}(0,1)}
  \left[
    \omega_{\sem}(\tau,\rho)
    \ell_{\sem}
    \big(
      \theta;
      t_{\tex}(\tau;\rho),
      \tau
    \big)
  \right]
  =
  \int_0^1
    \omega_{\sem}(s,\rho)
    \ell_{\sem}
    \big(
      \theta;
      t_{\tex}(s;\rho),
      s
    \big)
  \,ds .
\end{equation}
Matching the second term of
Eq.~\eqref{eq:local-time-invariance-condition} for arbitrary \(\ell_{\sem}\)
therefore requires \(\omega_{\sem}(\tau,\rho)=1\) almost everywhere.

\section{Additional Qualitative Results}
\label{app:qualitative}

We show additional class-conditional samples from the final Ours-XL
model at \(256\times256\), one figure per ImageNet class. All samples
are generated with AutoGuidance (\(w=1.5\)), matching the setting of
Table~\ref{tab:fid-ag}.

\begin{figure}[!htbp]
  \centering
  \IfFileExists{figures/qualitative_cockatoo.png}{%
    \includegraphics[width=\linewidth]{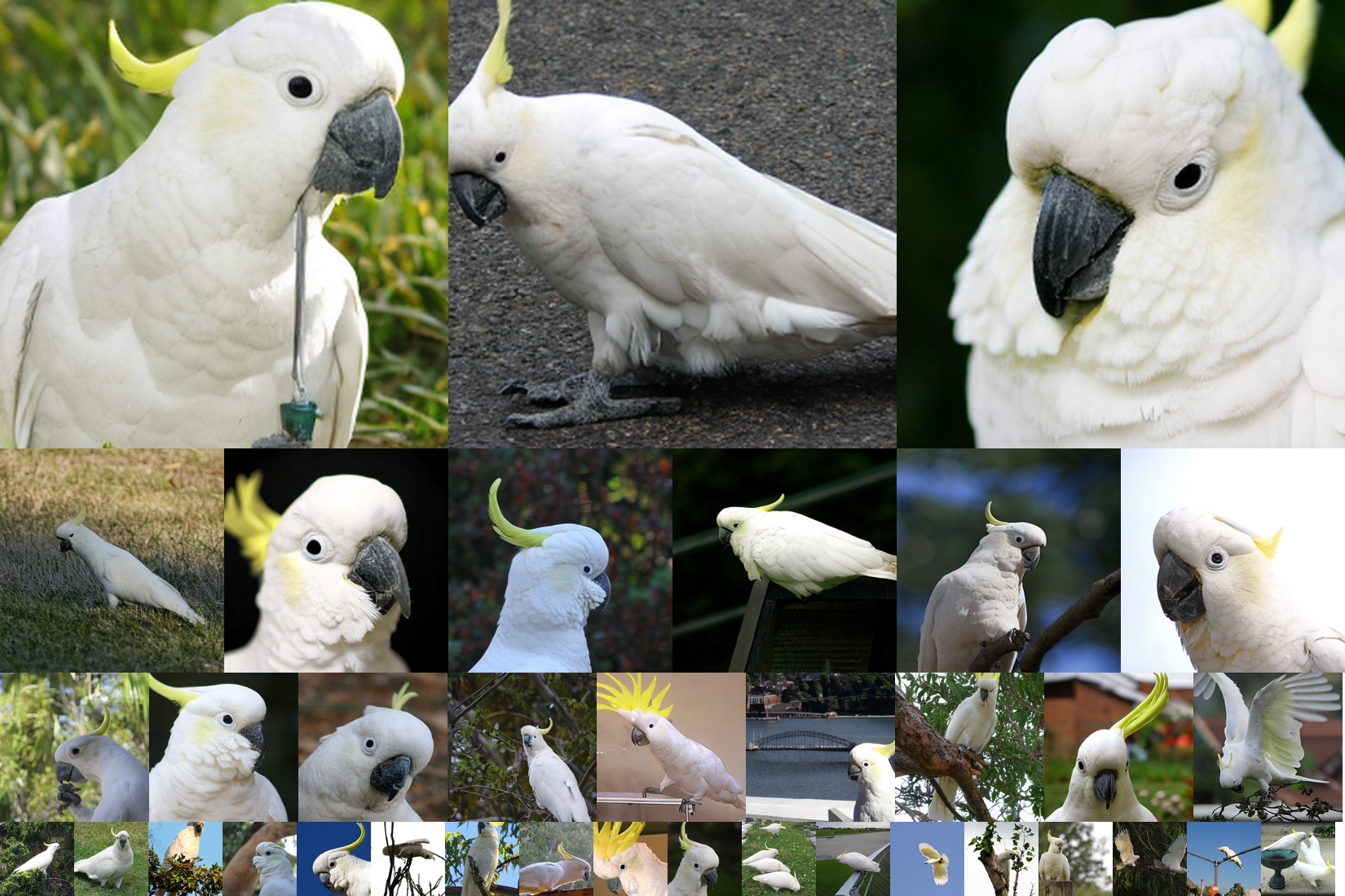}%
  }{%
    \fbox{\begin{minipage}[c][0.45\textheight][c]{0.95\linewidth}
    \centering \TODO{Paste cockatoo (89) sample grid here.}
    \end{minipage}}%
  }
  \caption{Cockatoo (class~89). Samples from Ours-XL with AutoGuidance, \(w=1.5\).}
  \label{fig:qual-cockatoo}
\end{figure}

\begin{figure}[!htbp]
  \centering
  \IfFileExists{figures/qualitative_husky.png}{%
    \includegraphics[width=\linewidth]{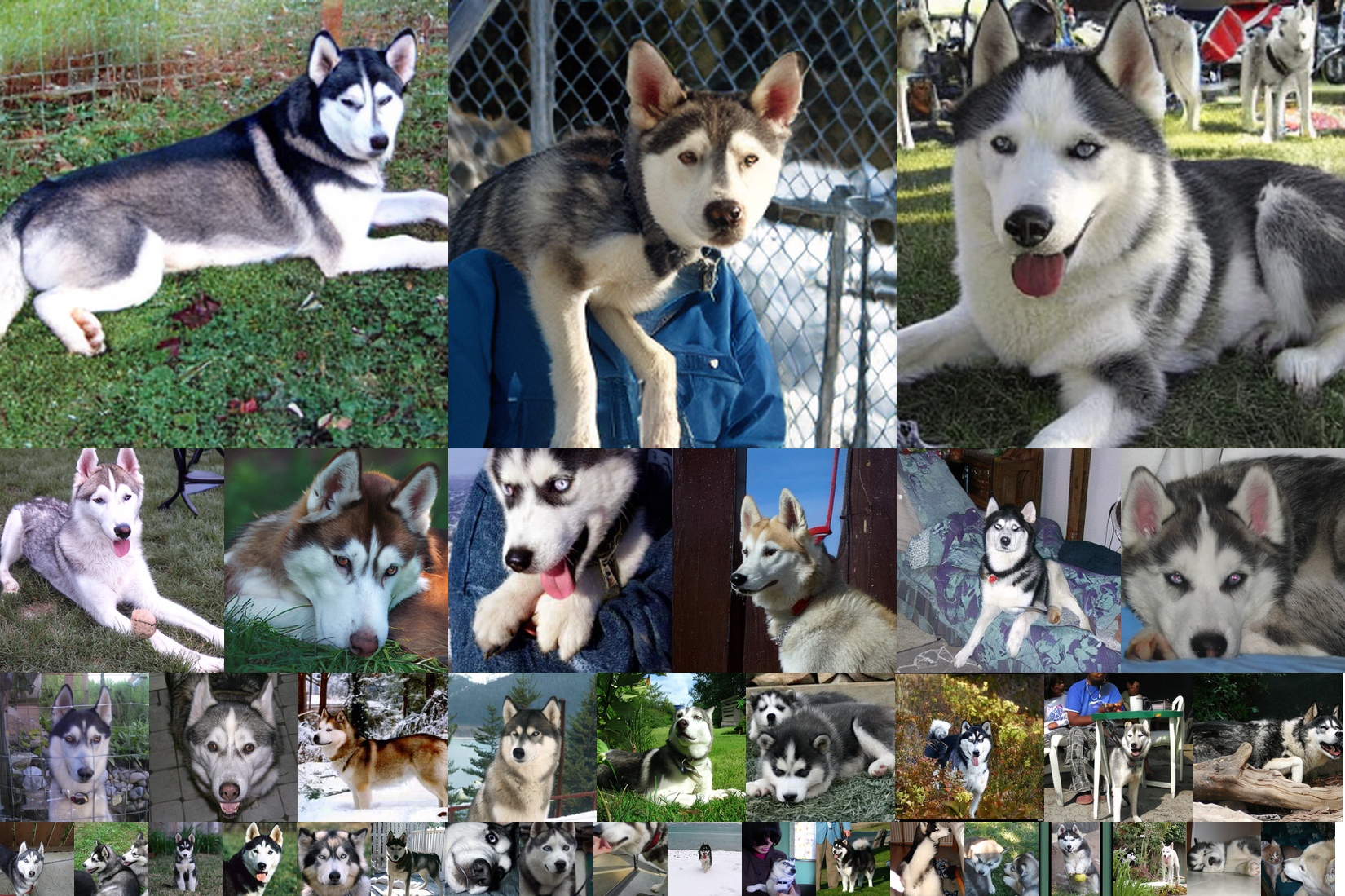}%
  }{%
    \fbox{\begin{minipage}[c][0.45\textheight][c]{0.95\linewidth}
    \centering \TODO{Paste husky (250) sample grid here.}
    \end{minipage}}%
  }
  \caption{Husky (class~250). Samples from Ours-XL with AutoGuidance, \(w=1.5\).}
  \label{fig:qual-husky}
\end{figure}

\begin{figure}[!htbp]
  \centering
  \IfFileExists{figures/qualitative_lion.png}{%
    \includegraphics[width=\linewidth]{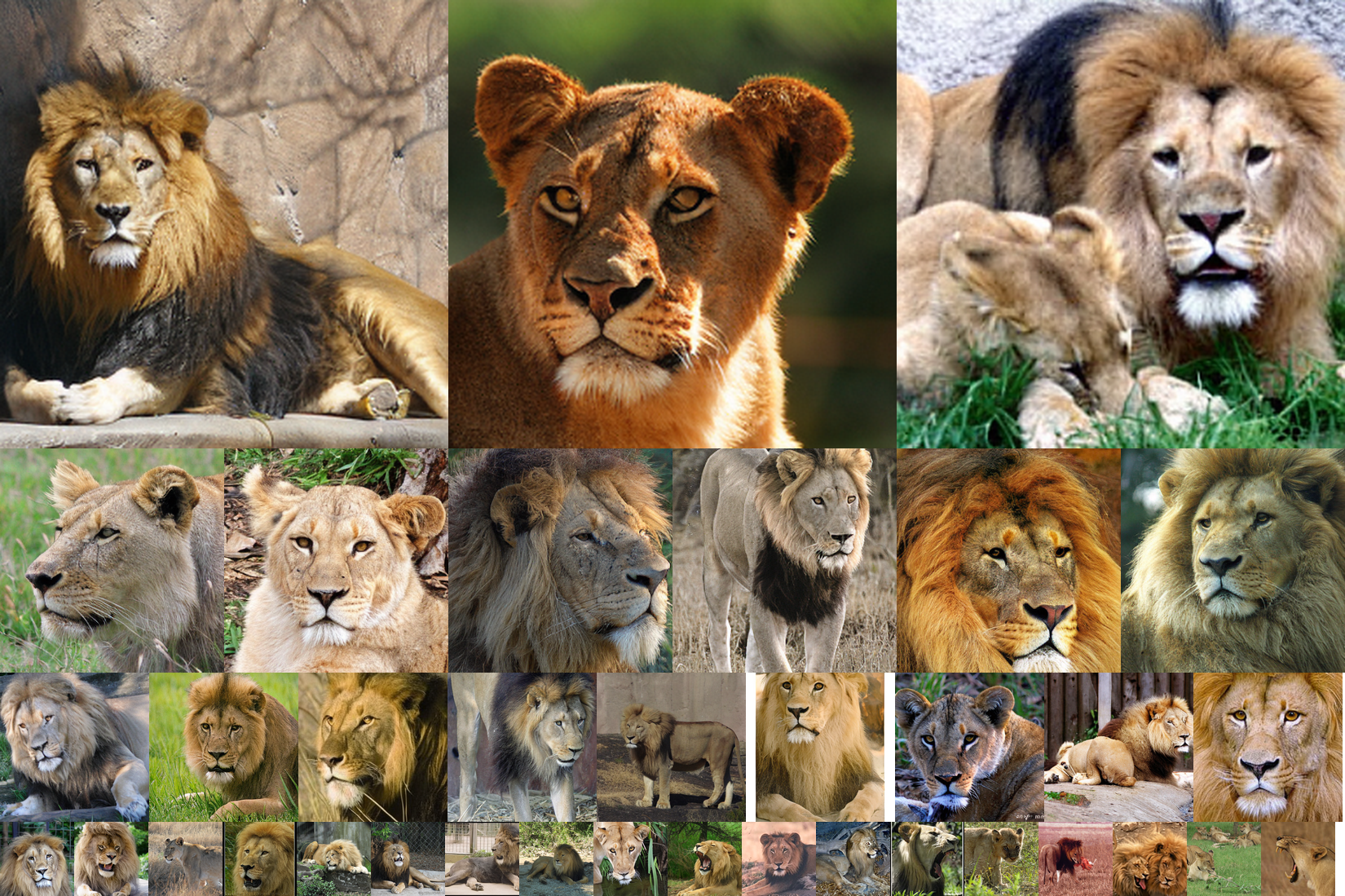}%
  }{%
    \fbox{\begin{minipage}[c][0.45\textheight][c]{0.95\linewidth}
    \centering \TODO{Paste lion (291) sample grid here.}
    \end{minipage}}%
  }
  \caption{Lion (class~291). Samples from Ours-XL with AutoGuidance, \(w=1.5\).}
  \label{fig:qual-lion}
\end{figure}

\begin{figure}[!htbp]
  \centering
  \IfFileExists{figures/qualitative_balloon.png}{%
    \includegraphics[width=\linewidth]{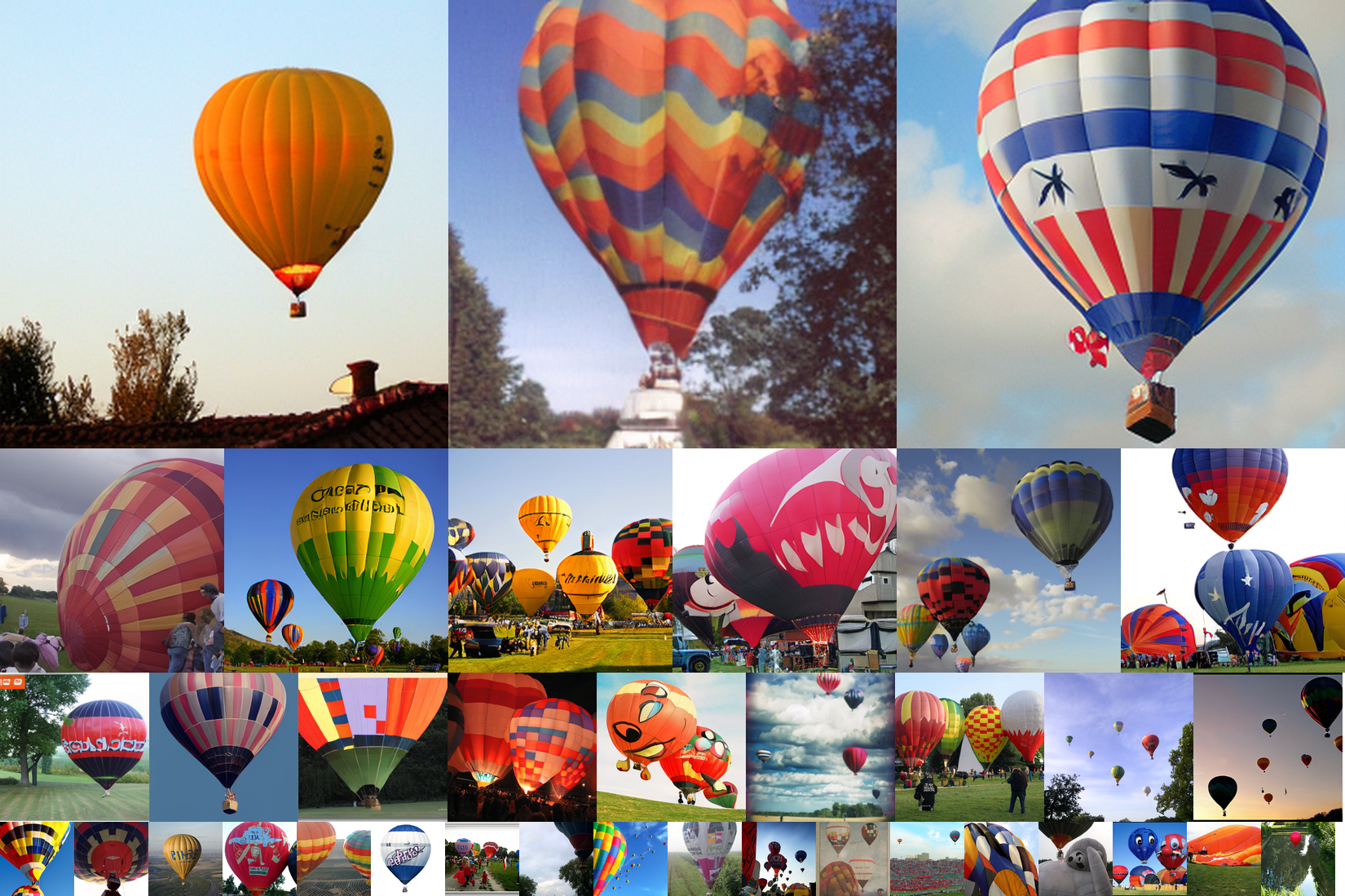}%
  }{%
    \fbox{\begin{minipage}[c][0.45\textheight][c]{0.95\linewidth}
    \centering \TODO{Paste balloon (417) sample grid here.}
    \end{minipage}}%
  }
  \caption{Balloon (class~417). Samples from Ours-XL with AutoGuidance, \(w=1.5\).}
  \label{fig:qual-balloon}
\end{figure}

\begin{figure}[!htbp]
  \centering
  \IfFileExists{figures/qualitative_coral.png}{%
    \includegraphics[width=\linewidth]{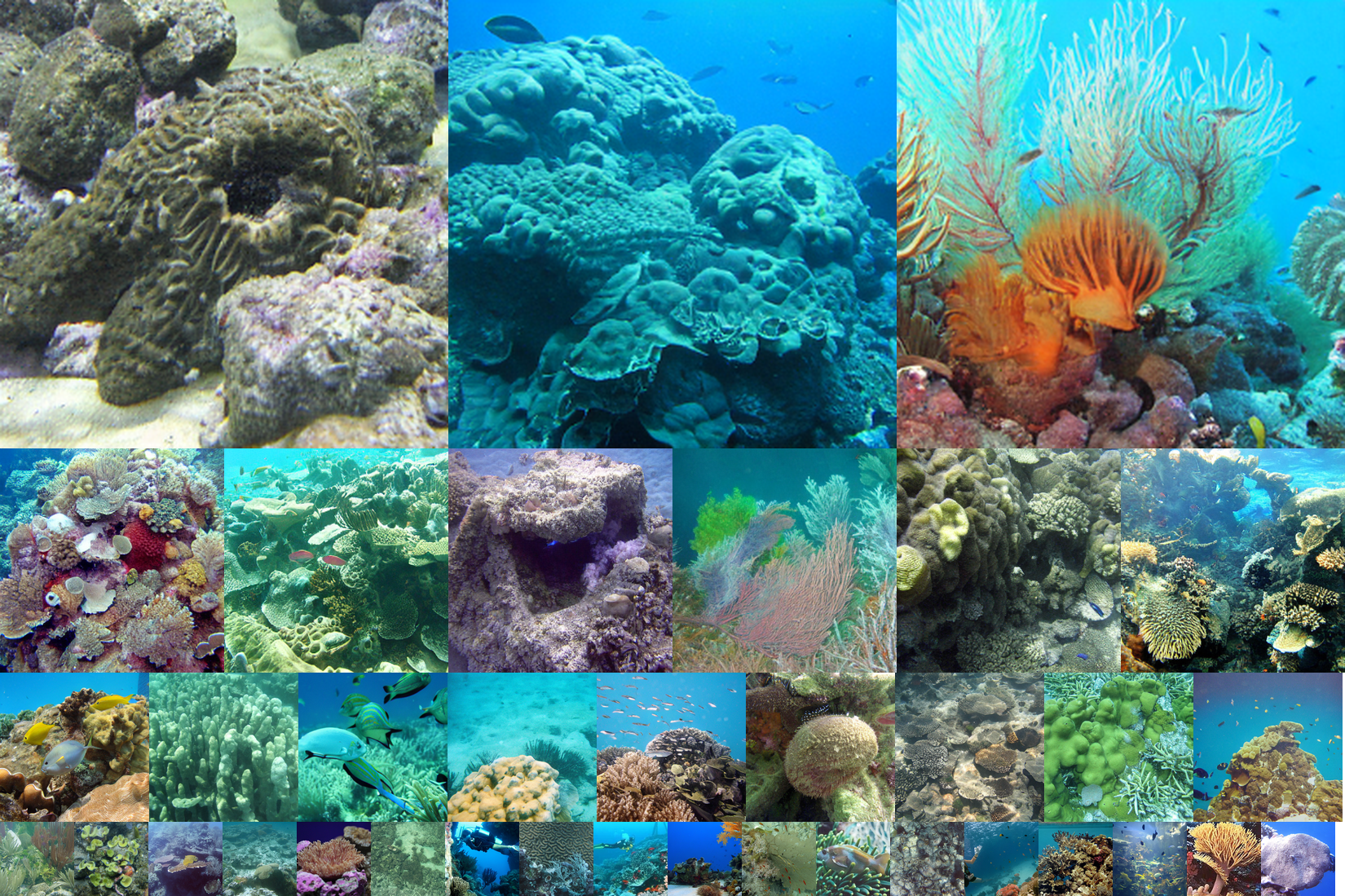}%
  }{%
    \fbox{\begin{minipage}[c][0.45\textheight][c]{0.95\linewidth}
    \centering \TODO{Paste coral reef (973) sample grid here.}
    \end{minipage}}%
  }
  \caption{Coral reef (class~973). Samples from Ours-XL with AutoGuidance, \(w=1.5\).}
  \label{fig:qual-coral}
\end{figure}

\begin{figure}[!htbp]
  \centering
  \IfFileExists{figures/qualitative_volcano.png}{%
    \includegraphics[width=\linewidth]{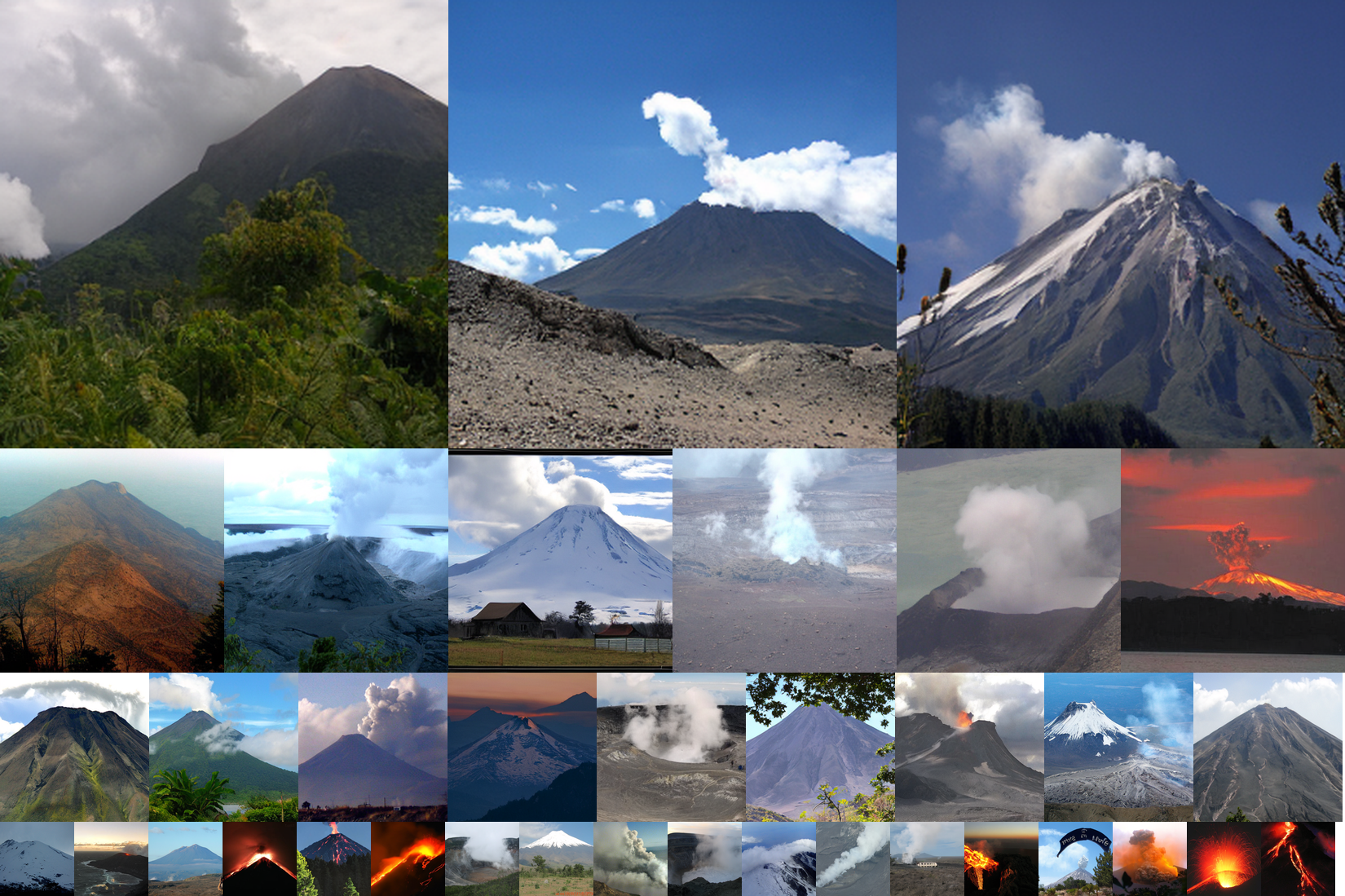}%
  }{%
    \fbox{\begin{minipage}[c][0.45\textheight][c]{0.95\linewidth}
    \centering \TODO{Paste volcano (980) sample grid here.}
    \end{minipage}}%
  }
  \caption{Volcano (class~980). Samples from Ours-XL with AutoGuidance, \(w=1.5\).}
  \label{fig:qual-volcano}
\end{figure}

\end{document}